\documentclass[sigconf]{aamas}  

\AtBeginDocument{%
  \providecommand\BibTeX{{%
    \normalfont B\kern-0.5em{\scshape i\kern-0.25em b}\kern-0.8em\TeX}}}
    
\usepackage{algorithm}
\usepackage{algorithmic}
\usepackage{enumerate}
\usepackage{enumitem}
\usepackage{framed}
\usepackage{subfigure}
\usepackage{booktabs}    
\usepackage{flushend} 

\setcopyright{ifaamas}  
\copyrightyear{2020} 
\acmYear{2020} 
\acmDOI{} 
\acmPrice{} 
\acmISBN{} 
\acmConference[AAMAS'20]{Proc.\@ of the 19th International Conference on Autonomous Agents and Multiagent Systems (AAMAS 2020)}{May 9--13, 2020}{Auckland, New Zealand}{B.~An, N.~Yorke-Smith, A.~El~Fallah~Seghrouchni, G.~Sukthankar (eds.)}  


\begin{document}

\title{Redistribution Mechanism on Networks}  


\author{Wen Zhang}
\affiliation{%
  \institution{ShanghaiTech University}
  \city{Shanghai} 
  \country{China}
}
\email{zhangwen@shanghaitech.edu.cn}
\author{Dengji Zhao}
\affiliation{%
  \institution{ShanghaiTech University}
  \city{Shanghai} 
  \country{China}
}
\email{zhaodj@shanghaitech.edu.cn}
\author{Hanyu Chen}
\affiliation{%
	\institution{ShanghaiTech University}
	\city{Shanghai} 
	\country{China}
}
\email{chenhy1@shanghaitech.edu.cn}

\begin{abstract}
  Redistribution mechanisms have been proposed for more efficient resource allocation but not for profit. We consider redistribution mechanism design in a setting where participants are connected and the resource owner is only connected to some of them. In this setting, to make the resource allocation more efficient, the resource owner has to inform the others who are not her neighbours, but her neighbours do not want more participants to compete with them. Hence, the goal is to design a redistribution mechanism such that participants are incentivized to invite more participants and the resource owner does not earn or lose much money from the allocation. We first show that existing redistribution mechanisms cannot be directly applied in the network setting and prove the impossibility to achieve efficiency without a deficit. Then we propose a novel network-based redistribution mechanism such that all participants on the network are invited, the allocation is more efficient and the resource owner has no deficit.
\end{abstract}

%

\keywords{mechanism design, redistribution, social networks} 

\maketitle


\section{Introduction}

The problem of resource allocation is about to decide the allocation of some resource among a group of self-interested agents. Since the valuation for the resource differs from various agents, it is a natural objective for the resource owner to pursue the efficiency of the allocation, i.e., allocating the resource to the agent with the highest valuation. In many scenarios, the owner does not really aim at making profits but hopes the wealth maintained among the agents. For example, the government wants to build a library in a community that values it the most; a charity distributes a donation to the recipient who needs it the most; a hospital allocates doctors to rural areas where doctors are highly demanded.

To find the agent with the highest valuation, one common solution is to hold an auction such as the well-known Vickrey-Clarke-Groves (VCG) mechanism~\cite{krishna2009auction,vickrey1961counterspeculation,clarke1971multipart,groves1973incentives}. However, the payments under VCG will be delivered to the auctioneer, which againsts our non-profit purpose. To maintain as much wealth as possible among the participants, many redistribution mechanisms based on VCG have been proposed~\cite{Cavallo:2006:ODM:1160633.1160790,DBLP:conf/aaai/Guo11}. They redistribute the revenue generated by VCG back to all participants. However, these mechanisms can only be applied in static settings, i.e., the resource owner can only allocate the item to the person whom she can directly contact with (her neighbours). 

Therefore, in our setting, we have another issue: how can the owner enroll more participants in the resource allocation problem in order to achieve a more efficient allocation? Advertising is a widely used method to disseminate information to attract more people. However, it should be paid in advance without a guarantee that there will be more participants or a more efficient allocation. Moreover, it is irrational for a resource owner who no longer cares about profit to pay something for the allocation. Therefore, in this paper, we consider a cost-free promotion by incentivizing participants to invite their neighbours via their social connections~\cite{easley2010networks}, which is an enormous challenge as no one would be willing to invite more competitors without a profit.

Hence, in this paper, we propose a new network-based redistribution mechanism to tackle this challenge, where the reward redistributed to each agent is a monotone increasing function to the number of participants she invites. Although the agents are not paid in advance, they are still incentivized not only to report their valuation truthfully but also to diffuse the information to all their neighbours without sacrificing the non-deficit guarantee. This is one of the key features of our mechanism. Eventually, more agents will be informed about the resource allocation and a more efficient allocation will be achieved. Moreover, it also satisfies the desirable properties of traditional redistribution mechanisms such as individual rationality and asymptotically budget-balance.

Some interesting work related to information diffusion via networks have been studied recently. \citeauthor{DBLP:conf/aaai/LiHZZ17}~\shortcite{DBLP:conf/aaai/LiHZZ17} proposed an auction mechanism where the seller sells one item in a social network and the participants are inviting each other to attract more participants. With this inspiration, soon afterwards \citeauthor{Zhao:2018:SMI:3237383.3237400}~\shortcite{Zhao:2018:SMI:3237383.3237400} generalized the mechanism for multiple homogeneous items in the same setting. Their attention is on how to maximize the seller's revenue, which is different from ours. We aim for a more efficient allocation without profits by attracting more participants. We refer to the idea of their work and design our redistribution mechanism to achieve the goal.

There exists a rich literature on redistribution mechanisms for resource allocation~\cite{DBLP:conf/ijcai/Guo19,moulin2009almost,gujar2011redistribution,guo2012worst}. Furthermore, redistribution mechanisms have also been extended to the settings of public projects~\cite{guo2011budget,naroditskiy2012redistribution,guo2013undominated,DBLP:conf/ijcai/Guo19}. However, none of the work took the natural connections between participants into account. More to the point, our mechanism promises desirable properties when participants are connected via their personal connections which cannot be achieved by the existing mechanisms.

We claim our main contributions here. First, to the best of our knowledge, we are the very first to study the redistribution mechanism design problem in social networks. Second, we show the limitations of the classical Cavallo mechanism if it is directly extended in social networks and prove the impossibility to achieve efficiency without a deficit in our setting. Third, we propose a novel network-based redistribution mechanism which improves the efficiency of the allocation without sacrificing all the desirable properties.

The structure of the paper is organized as follows. Section~\ref{section:pre} describes the background and basic definitions of the problem. Section~\ref{section:Cavallo} extends the Cavallo mechanism in social networks and discusses its limitations. After that, we propose our network-based redistribution mechanism for tree structures, show its outstanding properties and prove the efficiency impossibility result in Section~\ref{section:Mech_Tree}. Finally, we generalize our mechanism in graphs in Section~\ref{section:Mech_Graph} and discuss future work.

\section{Preliminaries}
\label{section:pre}
We consider a setting where an owner $o$ wants to allocate an item in a social network $G=(V,E)$, where each agent $i\in V$ is a potential bidder with a private valuation $v_i\geq0$ for the item. Each agent $i\in V\cup \{o\}$ has a private neighbour set $r_i\subseteq V$. If there exists an edge $e(i,j)\in E$ from agent $i$ to agent $j$, we say $j$ is $i$'s neighbour, denoted by $j\in r_i$. Let $d_i$ be the depth of agent $i\in V$, which is the length of the shortest path from the owner to $i$. We say agent $j$ is agent $i$'s child neighbour if $j\in r_i$ and $d_j=d_i+1$, denoted by $j\in r_i^c$. The objective of the owner is to allocate the item to the agent with the highest valuation to the best of her ability and maintain as much wealth as possible among the agents. That is, she is aiming to minimize the surplus of the payment transfers in the mechanism. 

Initially, without any third-party platforms, the owner can only allocate the item to her neighbours since all the other agents cannot be reached directly. To attract more potential bidders, a feasible approach is to ask the agents to invite their neighbours to join the allocation. However, there is no reason for these bidders to invite more competitors. Thus, how to design the incentives for the agents to propagate the information without sacrificing desirable properties is the greatest challenge we need to overcome.

In this paper, we propose a novel network-based redistribution mechanism, where all the agents are willing not only to report their private valuation for the item but also to invite all their neighbours to the mechanism voluntarily.

We start by defining some notations in the mechanism:

\begin{itemize}[leftmargin= 20 pt]
	\item Let $\theta_i=(v_i,r_i)$ be the type of agent $i\in V$, which is $i$'s true private information. 
	\item Let $\theta=(\theta_1,\dots,\theta_n)=(\theta_{-i},\theta_i)$ be the type profile of all the agents, where $\theta_{-i}$ is the type profile for agents except $i$.
	\item Let $\Theta_i$ be the type space for agent $i\in V$ and $\Theta=(\Theta_1,\dots,\Theta_n)=(\Theta_{-i},\Theta_i)$ be the type profile space for all the agents.
	\item Let $\hat{\theta}_i=(\hat{v}_i,\hat{r}_i)$ be the reported type of agent $i\in V$, where $\hat{v}_i$ is the valuation she reported and $\hat{r}_i$ is the neighbour set she has invited. Let $\hat{\theta}_i=nil$, if agent $i$ is not invited.
	\item Let $G(\hat{\theta})=(V(\hat{\theta}),E(\hat{\theta}))$ be the graph generated by the reported type profile $\hat{\theta}$, which is constructed in the following way:
	$$
	\begin{cases}
	i\in V(\hat{\theta})& \text{if $i=o$ or $i\in \hat{r}_j$ where $j\in V(\hat{\theta})$}\\
	e(i,j)\in E(\hat{\theta})& \text{if $e(i,j)\in E$ and $i,j\in V(\hat{\theta})$}
	\end{cases}$$
\end{itemize}

Note that the reported type is not definitely the same as the true type. Therefore, we can easily observe that for each agent $i\in V$,

$$
\begin{cases}
\hat{\theta}_{i}\not=nil& \text{if $i\in V(\hat{\theta})$}\\
\hat{\theta}_{i}=nil& \text{if $i\not\in V(\hat{\theta})$}
\end{cases}$$

\begin{figure}[htbp]
	\centering
	\subfigure[]{%
		\label{chart:network}%
		\includegraphics[width=0.5\linewidth]{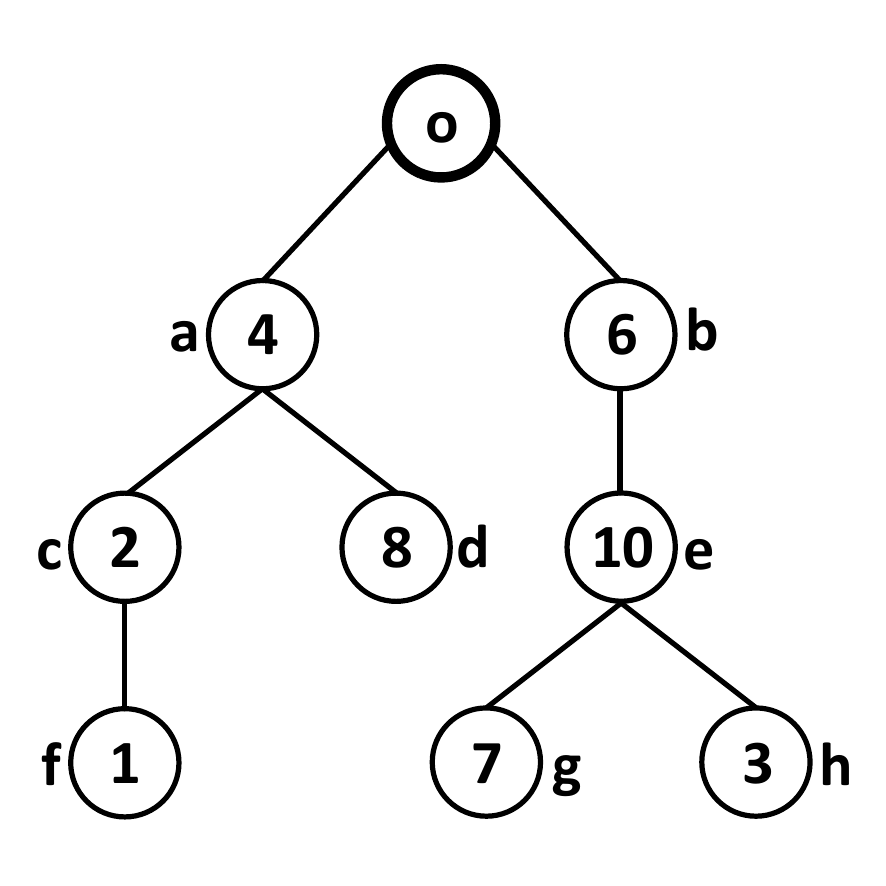}}%
	\subfigure[]{%
		\label{chart:graph}%
		\includegraphics[width=0.5\linewidth]{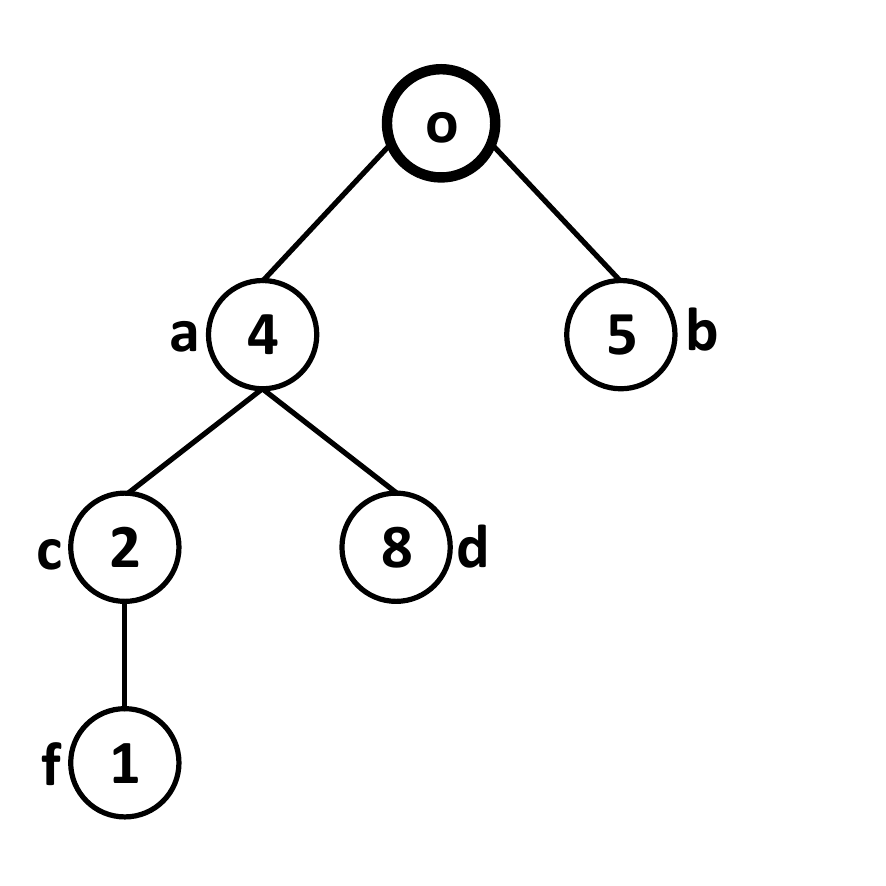}}\\
	\caption{An example of a social network $G$ and its generated graph $G(\hat{\theta})$.}
	\label{feasible}
\end{figure}

Figure~\ref{feasible} shows an example of the graph generation process. Figure~\ref{chart:network} is a real social network, where $o$ is the resource owner and other nodes are the bidders. The letter beside each node represents the ID of a bidder and the value in the node represents the bidder's private valuation for the item. Suppose that $\hat{\theta}$ is the reported type profile where all the bidders report the truthful type except that bidder $b$ misreports $\hat{\theta}_b=(5,\emptyset)$, which means that she misreports her valuation to be $5$ and does not invite her neighbour $e$. Then the corresponding graph $G(\hat{\theta})$ will be generated as Figure~\ref{chart:graph} shows, where bidder $e$, $g$ and $h$ do not occur as they cannot receive the information without $b$'s invitation and thus their reported type should all be $nil$.

Intuitively, if an agent is not informed about the allocation, she will not occur in the generated graph. Thus, there is no need to take such agents into consideration. In the following discussion, we only focus on the generated graph.

\begin{definition}
	A redistribution mechanism  $\mathcal{M}$ in the social network is defined by an allocation policy $\pi=(\pi_1, \pi_2,\dots, \pi_n)$ and a payment policy $p=(p_1,p_2,\dots, p_n)$, where $\pi_i:\Theta\rightarrow \{0,1\}$ and $p_i:\Theta\rightarrow\textbf{R}$.
\end{definition}

Given a reported type profile $\hat{\theta}\in \Theta$, the payment policy $p(\hat{\theta})=(p_1(\hat{\theta}),\dots,p_n(\hat{\theta}))$ represents the money paid by each agent and the allocation policy $\pi(\hat{\theta})=(\pi_1(\hat{\theta}),\dots,\pi_n(\hat{\theta}))$ represents the item allocation result. Intuitively, we have

$$
\begin{cases}
p_{i}(\hat{\theta})\geq0& \text{if agent $i$ pays $p_i$ to the owner}\\
p_{i}(\hat{\theta})<0& \text{if agent $i$ receives $-p_i$ from the owner}
\end{cases}$$

$$\pi_{i}(\hat{\theta})=
\begin{cases}
1& \text{if the item is allocated to agent $i$}\\
0& \text{if the item is not allocated to agent $i$}
\end{cases}$$

Since there is only one item to be allocated, it is natural to require that the allocation policy will not allocate the resource to those who have not participated and no more than one agent will win the item.

\begin{definition}
	We say an allocation $\pi(\hat{\theta})$ is feasible if at most one agent $i$ with $\hat{\theta}_i\not=nil$ is allocated the item, i.e., $$\sum_{i\in V, \hat{\theta}_i\not=nil}\pi_i(\hat{\theta})\leq1$$
\end{definition}

In our setting, we assume that there is no cost for an agent to spread the information to her neighbours. Therefore, given a reported type profile $\hat{\theta}$ of all the agents, the utility of agent $i\in V$ with type $\theta_i$ and reported type $\hat{\theta}_i$ is defined as:
\begin{center}
	$u_i(\theta_i,\hat{\theta})=\pi_i(\hat{\theta})v_i-p_i(\hat{\theta})$
\end{center}

Therefore, the surplus of the payment transfer of the mechanism is defined as:

\begin{center}
	$S(\hat{\theta})=\sum_{i\in V}p_i(\hat{\theta})$
\end{center}

\begin{definition}
	Given a reported type profile $\hat{\theta}\in\Theta$ and a feasible allocation $\pi$, the social welfare of allocation $\pi(\hat{\theta})$ is defined by $$SW(\hat{\theta})=\sum_{i\in V}\pi_{i}(\hat{\theta})\hat{v}_i$$
\end{definition}

That is, the social welfare of an allocation is the sum of the reported valuations of all agents who win the item for this allocation. The higher the social welfare is, the more \textbf{efficient} the allocation is. 

\begin{definition}
	A redistribution mechanism $\mathcal{M}=(\pi,p)$ is \textbf{individually rational} (IR) if for all $i\in V$ and all $\hat{\theta}\in\Theta$, we have $u_i(\theta_i,\hat{\theta})\geq0$, where $\hat{\theta}_i=(v_i, \hat{r}_i)$.
\end{definition}

This is a general extension of the traditional definition of individual rationality. That is, all the agents participated in the mechanism will not have negative utilities as long as she truthfully reports her private valuation. Note that the definition does not require the agents to invite all their neighbours, which loosens the restriction of reporting true type.

\begin{definition}
	A redistribution mechanism $\mathcal{M}=(\pi,p)$ is \textbf{incentive compatible} (IC) if for all $i\in V$ and all $\hat{\theta},\hat{\theta}'\in \Theta$, we have $u_i(\theta_i,\hat{\theta})\geq u_i(\theta_i,\hat{\theta}')$, where $\hat{\theta}_i=\theta_i$ and $\hat{\theta}_i'\not=\theta_{i}$. $\hat{\theta}'$ is the corresponding reported type profile when $i$ changes her reported type such that the reported type of any agent $j\not\in V(\hat{\theta}')$ is $nil$ and the others are the same as those in $\hat{\theta}$.
\end{definition}

For the traditional definition of incentive compatibility, all the buyers' dominant strategy is to truthfully report their private valuation of the item. Here, we put forward a stricter extended definition of IC for the network setting, where all the agents are incentivized not only to report valuation truthfully but also to invite all their neighbours.

\begin{definition}
	A redistribution mechanism $\mathcal{M}=(\pi,p)$ is \textbf{non-deficit} (ND) if for all $i\in V$ and all $\hat{\theta}\in \Theta$, we have $$S(\hat{\theta})=\sum_{i\in V}p_i(\hat{\theta})\geq0$$
\end{definition}

That is, the surplus of the payment transfer is non-negative, which is reasonable because the owner or other outside parties has to pay for the deficit otherwise.

\begin{definition}
	A redistribution mechanism $\mathcal{M}=(\pi,p)$ is \textbf{asymptotically budget-balanced} (ABB) if for all $\hat{\theta}\in \Theta$, we have $$\lim\limits_{|N| \to +\infty} S(\hat{\theta})=0$$
\end{definition}

This is to say when the number of the participants goes to infinity, almost all the money received by the owner will be redistributed back to the participants.

\section{Cavallo Mechanism in Social Networks}
\label{section:Cavallo}
Considering the constraint of generalized IR, extended IC, ND and ABB, seemingly some traditional redistribution mechanisms can be easily applied to the new setting in social networks. Therefore, in this section, we first review the classical Cavallo mechanism~\cite{Cavallo:2006:ODM:1160633.1160790} and show that it may lead to a deficit and disincentivize agents to diffuse the information.

The Cavallo mechanism modifies the VCG framework and redistributes the transfer payments back among the agents while keeping the specified desirable properties of VCG. The mechanism for a single item is outlined below:

\begin{framed}
	\noindent\textbf{Cavallo Mechanism}
	
	\noindent\rule{\textwidth}{0.5pt}
	
	\begin{enumerate}
		\item Each agent $i\in V$ submits her reported type $\hat{\theta}_i=(\hat{v}_i,\hat{r}_i)$, which forms a type profile $\hat{\theta}\in \Theta$ and a generated graph $G(\hat{\theta})$. 
		\item The mechanism chooses the highest bidder $w\in\arg\max_{i\in V}\hat{v}_i$ as the winner, which maximizes the social welfare, and allocates the item to her.
		\item All the agents' payments are defined by 	$p_i(\hat{\theta})=p_i^{VCG}(\hat{\theta})-p_i^{re}(\hat{\theta})$, where	$p_i^{VCG}(\hat{\theta})=SW(\hat{\theta}_{-i})-(SW(\hat{\theta})-\pi_i(\hat{\theta})\hat{v}_i)$ is the money paid for auction and $p_i^{re}(\hat{\theta})=\frac{S_{i}^{VCG}}{n}$ is the money redistributed to $i$.
		\item The surplus of the mechanism which is given to the resource owner is $S(\hat{\theta})=\sum_{i\in V}p_i(\hat{\theta})$.
	\end{enumerate}
\end{framed}

Intuitively, the Cavallo mechanism can be viewed as two stages: the auction stage and the redistribution stage. In the first auction stage, the item will be allocated to the highest bidder and she pays the loss of other players because of her participation to the owner as defined in the traditional VCG mechanism. Then in the second redistribution stage, the owner redistributes the money received to all the agents in the mechanism. The money redistributed to agent $i$ is calculated by $\frac{S_{i}^{VCG}}{n}$, where $S_{i}^{VCG}$ is the surplus lower-bound in VCG among the same agents over all possible reported valuation of $i$. Specially, in the single-item setting, the payment in the first stage of the highest bidder is the second highest reported valuation. Let $m_j$ be the $j^{th}$ highest bidder among all the agents $V$. The redistributed money is $\frac{\hat{v}_{m_3}}{n}$ for the highest bidder and the second highest bidder, and $\frac{\hat{v}_{m_2}}{n}$ for the others. Consequently, all the agents share the surplus and the rewards are independent of their reported valuation.

Although the Cavallo mechanism is IR and ABB, we then show that it may run a deficit and agents may be not willing to diffuse the information in social networks. 

\begin{figure}[htbp]
	\centering
	\subfigure[]{%
		\label{chart:deficit}%
		\includegraphics[width=0.45\columnwidth]{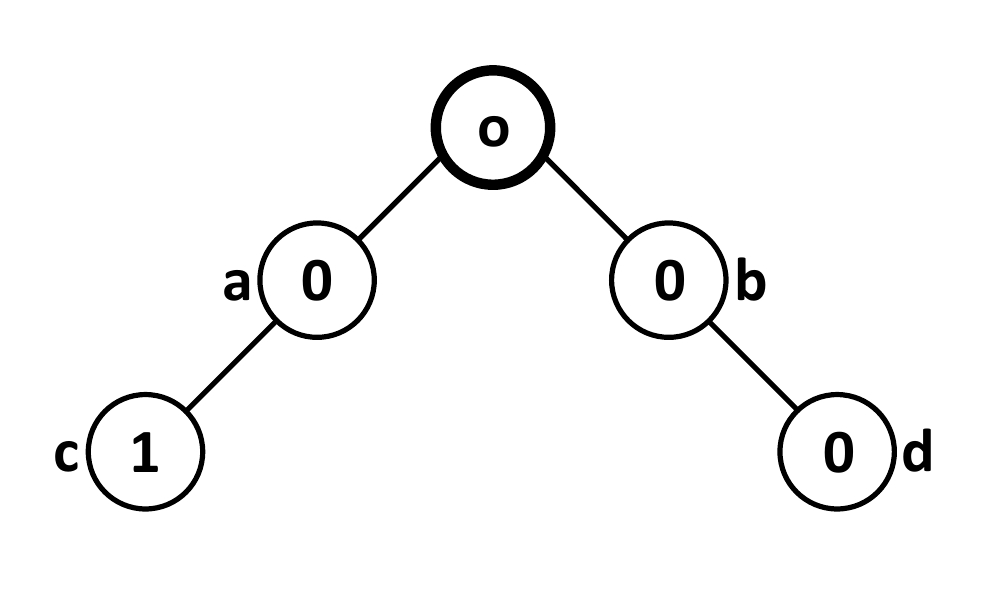}}%
	\subfigure[]{%
		\label{chart:disincentivize}%
		\includegraphics[width=0.45\columnwidth]{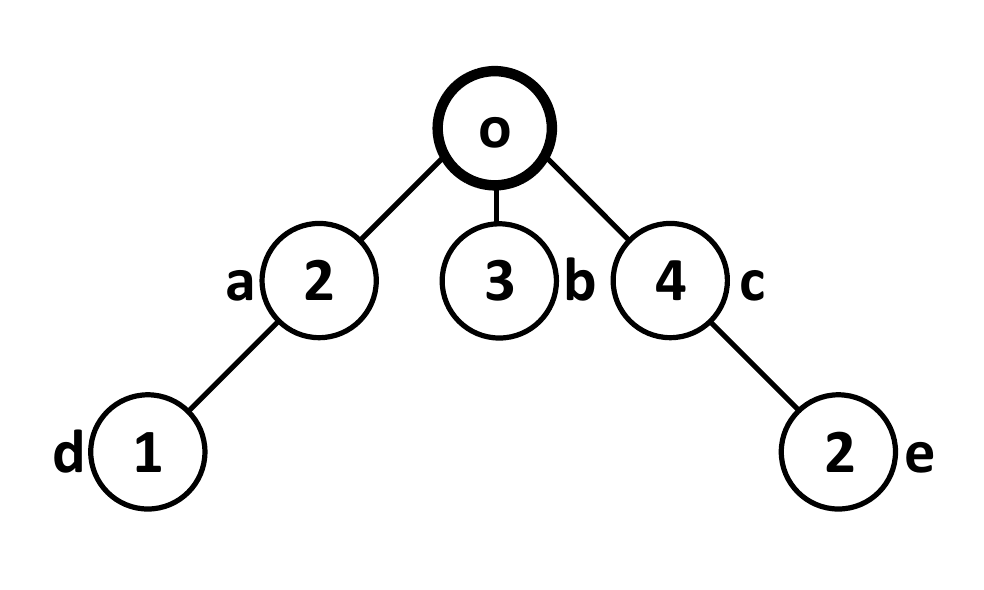}}\\
	\caption{(a) An example of running a deficit; (b) An example of disincentivizing the diffusion.}
	\label{proposition}
\end{figure}

\begin{proposition}\label{pro:deficit}
	The Cavallo mechanism runs a deficit.
\end{proposition}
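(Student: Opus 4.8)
The plan is to prove the proposition by exhibiting a single network instance on which the extended Cavallo mechanism yields a strictly negative surplus $S(\hat{\theta})$; since non-deficit is a universally quantified property, one counterexample suffices. The instance I would use is the one underlying Figure~\ref{chart:deficit}: a short chain in which a low-value agent sits between the owner and a high-value agent. Concretely, take $o$ with a single neighbour $a$ of small valuation (say $v_a=1$), let $a$'s only child neighbour be $b$ with a large valuation (say $v_b=10$), and optionally attach one extra direct neighbour $c$ of the owner with an intermediate value so the redistribution stage is non-degenerate. All agents report truthfully, so $G(\hat{\theta})$ is exactly this network and $b$ is the winner with $SW(\hat{\theta})=\hat{v}_b$.

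First I would compute the auction-stage payments $p_i^{VCG}(\hat{\theta})=SW(\hat{\theta}_{-i})-(SW(\hat{\theta})-\pi_i(\hat{\theta})\hat{v}_i)$, being careful about the network semantics of removing an agent. The crucial observation is that $b$ is reachable only through $a$, so deleting $a$ (setting $\hat{\theta}_a=nil$) also removes $b$ from $V(\hat{\theta}_{-a})$; hence $SW(\hat{\theta}_{-a})$ collapses to the best remaining value, which is far below $\hat{v}_b$. This forces $p_a^{VCG}(\hat{\theta})$ to be large and negative --- the owner must effectively pay the gatekeeper $a$ for being the unique gateway to the high-value agent --- while the winner $b$ pays only the second-highest value, which is small. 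I would then sum the payments across all agents to show that the auction-stage surplus $\sum_{i\in V}p_i^{VCG}(\hat{\theta})$ is already strictly negative.

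The second step is to carry the redistribution stage and conclude. Since $S(\hat{\theta})=\sum_{i\in V}p_i(\hat{\theta})=\sum_{i\in V}p_i^{VCG}(\hat{\theta})-\sum_{i\in V}p_i^{re}(\hat{\theta})$ and each redistributed amount $p_i^{re}(\hat{\theta})=\frac{S_i^{VCG}}{n}$ equals either $\frac{\hat{v}_{m_3}}{n}$ or $\frac{\hat{v}_{m_2}}{n}$ and is therefore non-negative, the redistribution can only transfer more money out of the owner's account. Hence $S(\hat{\theta})\le\sum_{i\in V}p_i^{VCG}(\hat{\theta})<0$, so the mechanism runs a deficit on this instance, proving the claim. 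For the illustrative numbers above one checks directly that the auction-stage surplus is negative and that the non-negative redistribution terms only deepen it.

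I expect the main obstacle to be the first step: pinning down the correct network interpretation of $SW(\hat{\theta}_{-i})$ --- namely that excluding an intermediary also disconnects every descendant reachable only through it, consistent with the $nil$-propagation used in the definitions of $G(\hat{\theta})$ and of IC --- and confirming that this is precisely what drives $p_a^{VCG}(\hat{\theta})$ negative enough to overwhelm the winner's modest payment. Once the subtree-removal semantics is fixed, the deficit is structural rather than accidental: the mechanism is forced to reward intermediaries for diffusion through the VCG term, yet collects only the second-highest value from the winner, and the non-negative redistribution cannot repair the gap.
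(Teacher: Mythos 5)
Your proposal is correct and uses essentially the same argument as the paper: a counterexample in which a low-value gatekeeper sits between the owner and the unique high-value bidder, so that removing the gatekeeper disconnects the winner, forcing her VCG term to be negative while the winner pays only the (small) second-highest value and the non-negative redistribution cannot close the gap. The only caveat is your optional third neighbour $c$: the auction-stage surplus is $2\max(\hat{v}_a,\hat{v}_c)-\hat{v}_b$, so it is strictly negative only when $\hat{v}_c<\hat{v}_b/2$ (the bare chain $o\to a\to b$ already suffices, exactly as in the paper's Figure~\ref{chart:deficit}).
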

\begin{proof}
	We prove the proposition by showing a counter example in Figure~\ref{chart:deficit}. Since agent $c$ is the only one with positive valuation, the mechanism will allocate the item to her. For agent $b$ and $d$, their participation does not affect the result, thus they will pay nothing in the first auction stage. For agent $c$, if she does not participate in the mechanism, no one else will win the item with positive utility, thus her payment is also zero. For agent $a$, if she quits the mechanism, agent $c$ will not be involved in, so her payment is $-1$. Since there is only one agent with positive valuation, all the agents will be redistributed nothing in the redistribution stage. Thus we have $S(\hat{\theta})=\sum_{i\in V}p_i(\hat{\theta})=-1$, which runs a deficit. 
\end{proof}

\begin{proposition}\label{pro:incentive}
	The Cavallo mechanism disincentivizes the agents to diffuse the information.
\end{proposition}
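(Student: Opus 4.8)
The plan is to prove this by exhibiting a counterexample, mirroring the structure used for Proposition~\ref{pro:deficit} and built on the network in Figure~\ref{chart:disincentivize}. The goal is to construct a generated graph containing an agent $a$ and one of her child neighbours $b$ such that $a$ obtains strictly higher utility by reporting a neighbour set $\hat{r}_a$ that omits $b$ than by inviting $b$. Since the extended IC definition demands that every agent be incentivized to invite \emph{all} her neighbours, such an $a$ witnesses that the Cavallo mechanism disincentivizes diffusion.

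First I would fix the network so that the two highest bidders are agents other than $a$ and $b$, and so that these two remain the top two bidders regardless of whether $b$ participates. Concretely, I would place two high-valuation agents directly adjacent to the owner $o$, together with $a$, and assign $a$ and her neighbour $b$ valuations low enough that neither ever enters the top two. Then the second-highest reported valuation $\hat{v}_{m_2}$ and the third-highest $\hat{v}_{m_3}$ are pinned down by the two dominant bidders and are unaffected by $b$'s presence in the generated graph.

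Next I would compare $a$'s utility in the two scenarios. In both cases $a$ is neither the winner nor pivotal for the allocation, so her first-stage payment $p_a^{VCG}(\hat{\theta})=0$ and $\pi_a(\hat{\theta})=0$; her entire utility therefore comes from the redistribution term. Being outside the top two, $a$ receives $p_a^{re}=\frac{\hat{v}_{m_2}}{n}$, so $u_a=\frac{\hat{v}_{m_2}}{n}$. When $a$ invites $b$ the generated graph has $n$ participants, whereas withholding the invitation (forcing $\hat{\theta}_b=nil$) leaves $n-1$ participants, giving her $\frac{\hat{v}_{m_2}}{n-1}$ instead. Since $\hat{v}_{m_2}>0$ we have $\frac{\hat{v}_{m_2}}{n-1}>\frac{\hat{v}_{m_2}}{n}$, so $a$ strictly increases her utility by not inviting $b$, which is precisely the claimed disincentive.

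The delicate step is verifying that introducing $b$ affects $a$ \emph{only} by inflating the denominator $n$ in the redistribution term: I must confirm that $b$'s arrival neither changes the winner, nor makes $a$ or $b$ pivotal (so $p_a^{VCG}$ stays $0$), nor shifts the identities of $m_2$ and $m_3$ (so the numerator $\hat{v}_{m_2}$ stays fixed). This is exactly what the choice of low valuations for $a$ and $b$, together with keeping the two dominant bidders reachable from $o$ independently of $a$, is designed to guarantee. Once these invariances are checked, the strict monotonicity of $\frac{\hat{v}_{m_2}}{n}$ in $n$ delivers the inequality and completes the argument.
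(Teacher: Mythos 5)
Your proposal is correct and takes essentially the same route as the paper: both exhibit a counterexample in which an agent's withheld invitation leaves the winner, the VCG payments, and the redistribution numerator ($\hat{v}_{m_2}$, or $\hat{v}_{m_3}$ for the top two bidders) unchanged while shrinking the denominator $n$, so her redistributed share strictly increases. The paper instantiates this with the concrete five-agent network of Figure~\ref{chart:disincentivize}, whereas you describe the same construction generically; either suffices.
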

\begin{proof}
	By showing a counter example in Figure~\ref{chart:disincentivize}, we can easily prove the proposition. As agent $c$ is the highest bidder, she keeps the item and pays the second highest valuation $p_c^{VCG}(\hat{\theta})=\hat{v}_b=3$. Then in the redistribution stage, all these $5$ agents will share the surplus. For agent $b$ and $c$, $S_b^{VCG}=S_c^{VCG}=\hat{v}_a=2$, then $p_b^{re}(\hat{\theta})=p_c^{re}(\hat{\theta})=2/5$. For agent $a$, $d$ and $e$, $S_a^{VCG}=S_d^{VCG}=S_e^{VCG}=\hat{v}_b=3$, then $p_a^{re}(\hat{\theta})=p_d^{re}(\hat{\theta})=p_e^{re}(\hat{\theta})=3/5$. However, if $a$ stops inviting $d$ and $c$ stops inviting $e$, the allocation and the surplus will remain the same but the number of agents who share the surplus will decrease. Then $p_a^{re}(\hat{\theta}')=3/3=1>p_a^{re}(\hat{\theta})$ and $p_c^{re}(\hat{\theta}')=2/3>p_c^{re}(\hat{\theta})$. Thus, the Cavallo mechanism disincentivizes the agents' diffusion.
\end{proof}

Proposition~\ref{pro:deficit} and~\ref{pro:incentive} show that owing to the special constraint of social networks, extending the Cavallo mechanism into network settings simply is not feasible. In the following section, we will introduce our novel mechanism with all the desirable properties satisfied.

\section{Redistribution Mechanism in Trees}
\label{section:Mech_Tree}
To tackle the challenges on networks, we propose a network-based redistribution mechanism (NRM) which satisfies all the desirable properties mentioned before. In this section, we will first start with a special type network, tree structures, which provides a clearer presentation of the intuition behind. Later we will generalize our mechanism on common graphs. 

\begin{definition}
	Given a reported type profile $\hat{\theta}\in\Theta$ of all the agents and the generated tree graph $G=(V,E)$, for each agent $i,j\in V$ if there exists a simple path from the seller $s$ to $j$ through $i$ with the depth relation $d_i<d_j$, we say $i$ is $j$'s \textbf{ancestor} and $j$ is $i$'s \textbf{descendant}.
\end{definition}

Some basic notations in the mechanism is defined as:

\begin{itemize}[leftmargin= 20 pt]
	\item Let $A_i=(a_1,\cdots,a_k)$ be the \textbf{ancestor sequence} of agent $i$, where $a_j\in A_i$ is an ancestor of agent $i$ and $d_{a_1}<d_{a_2}<\cdots<d_{a_k}$.
	\item Let $G_i=(V_i,E_i)$ be the \textbf{subtree} of agent $i$ if $G_i$ is a tree consisting of $i$ and all its descendants in $G$. Let $n_i=|V_i|$ be the number of agents in $G_i$.
	\item Let $B_{a_j}=r_{a_{j-1}}^c\setminus {a_j}$ be the \textbf{sibling set} of agent $a_j\in A_i$, where all the agents in $B_{a_j}$ has the same parent as $a_j$.
	\item Let $\hat{v}_D^{(1)}$ denote the highest reported valuation among all the agents in any set $D$.
\end{itemize}

That is, for each agent $i\in V$, she cannot join in the mechanism if any agent in her ancestor sequence $A_i$ does not diffuse the information. Besides, without the invitation of agent $i$, any agent in her subtree $V_i$ cannot receive the information.

Now we will propose our NRM in Trees. The detailed procedure is given in Algorithm~\ref{NRM-T}.

\begin{algorithm}[t]
	\caption{Network-based Redistribution Mechanism}
	\begin{algorithmic}[1]
		\REQUIRE ~~\\ 
		A type profile $\hat{\theta}\in\Theta$ and the item owner $o$;
		\STATE construct the tree $G=(V,E)$ by $\hat{\theta}$;
		\STATE identify the highest bidder $h\in\arg\max_{i\in V}\hat{v}_i$;
		\STATE find the ancestor sequence $A_h$;
		\STATE find the sibling set $B_i$ for each agent $i\in A_h\cup\{h\}$;
		\STATE set $A=(o, A_h, h)=(a_0,a_1,\cdots,a_k,a_{k+1})$;
		\STATE initialize $\pi_i=0$, $R_i=0$ and $p_i=0$ for each agent $i$;
		\STATE initialize $p_{a_0}^{auc}=0$ and $S(\hat{\theta})=0$;
		\FOR{each $a_j$ in $(a_1,\cdots,a_{k+1})$}
		\STATE $p_{a_j}^{auc}=\hat{v}_{V\setminus V_{a_j}}^{(1)}$;
		\STATE $S_{a_j}=p_{a_j}^{auc}-p_{a_{j-1}}^{auc}$;
		\STATE $X=B_{a_j}\cup a_j$;
		\STATE $n_X= \sum_{q\in X} n_q$;
		\FOR{each $k\in X$}
		\STATE $h'\in\arg\max_{i\in V\setminus V_k}\hat{v}_i$;
		\STATE find $A_{h'}$ and $A'=(o,A_{h'},h')$;
		\STATE $S_{-k}=
		\begin{cases}
		\hat{v}_{V\setminus \{V_{a'_j}\cup V_k\}}^{(1)}-p_{a_{j-1}}^{auc} & \text{if $a_{j-1}=a'_{j-1}\in A'$} \\
		0 & \text{otherwise}
		\end{cases}
		$;
		\STATE $R_k=\frac{n_k}{n_X}\cdot S_{-k}$;
		\STATE $p_k=-R_k$;
		\ENDFOR
		\STATE Update surplus $S(\hat{\theta})= S(\hat{\theta})+S_{a_j}-\sum_{k\in B_{a_j}\cup a_j}R_k$;
		\IF{$\hat{v}_{a_j}\geq\hat{v}_{V\setminus V_{a_j}}^{(1)}$}
		\STATE $\pi_{a_j}=1$;
		\STATE $p_{a_j}=\hat{v}_{V\setminus V_{a_j}}^{(1)}-R_{a_j}$;
		\STATE break;
		\ENDIF
		\ENDFOR
		\STATE Return $\pi_i$ and $p_i$ for each agent $i$ and $S(\hat{\theta})$ for the owner;
	\end{algorithmic}
	\label{NRM-T}
\end{algorithm}

Intuitively, although the network-based redistribution mechanism is centralized, it can be viewed as a sequential procedure. The item passes through the ancestor sequence of the highest bidder $h$ and each agent $a_j\in A\cup\{h\}$ is required to pay the highest reported valuation without her participation for either passing or keeping the item. The item is allocated to the first agent whose valuation is higher than or equal to her required payment. The money each agent paid first compensates the last ancestor's payment and the remaining part will be redistributed among her siblings and herself. The money redistributed to agent $i$ is the new required payment difference multiplied by the percentage of agents in $i$'s subtree over all the agents in the subtrees of the ancestor and its siblings considered, which is a monotone increasing function to the number of their descendants. The more their descendants are, the more they will be redistributed, which incentivizes the agents' diffusion. The rest money which is not redistributed will be given to the owner as the surplus. Note that NRM is a centralized mechanism and all the operation process is run by the owner. Therefore, only the winner in NRM is the one who is required to pay the money to the owner while the ancestor sequence of the winner and their siblings will be redistributed rewards.

\begin{figure}[t]
	\centering
	\subfigure[]{%
		\label{chart:NRM_1}%
		\includegraphics[width=0.5\linewidth]{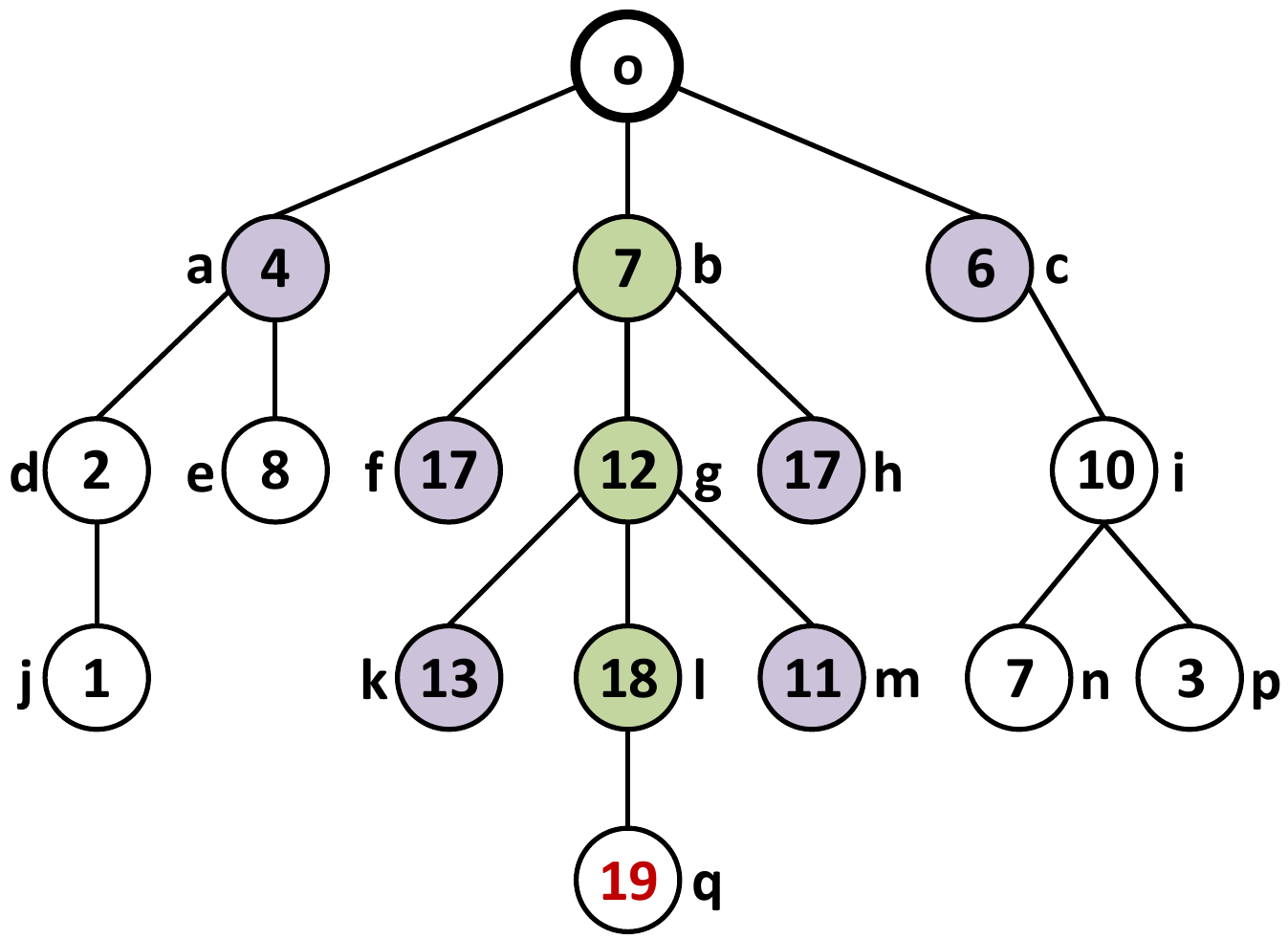}}%
	\subfigure[]{%
		\label{chart:NRM_2}%
		\includegraphics[width=0.5\linewidth]{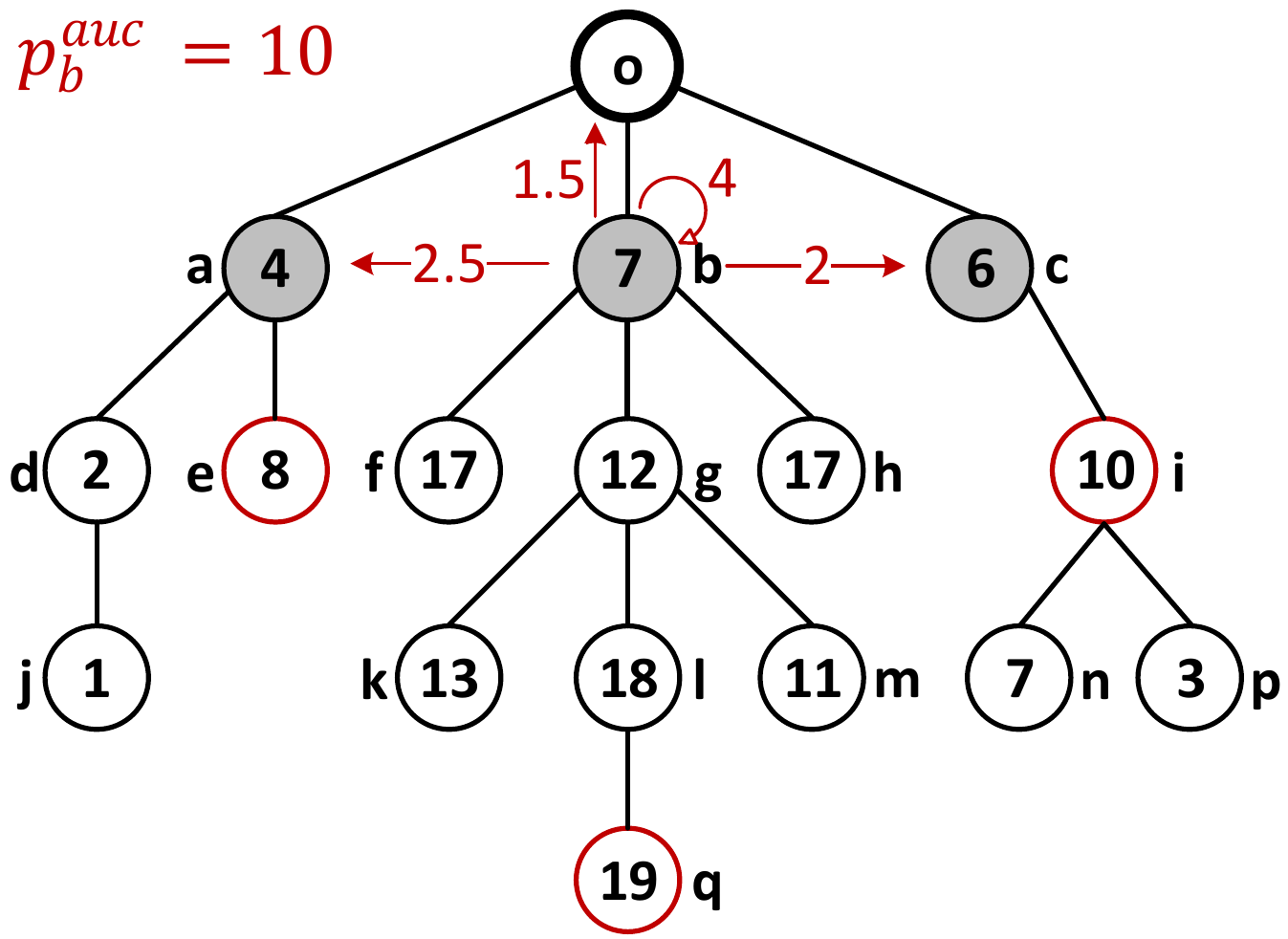}}\\
	\subfigure[]{%
		\label{chart:NRM_3}%
		\includegraphics[width=0.5\linewidth]{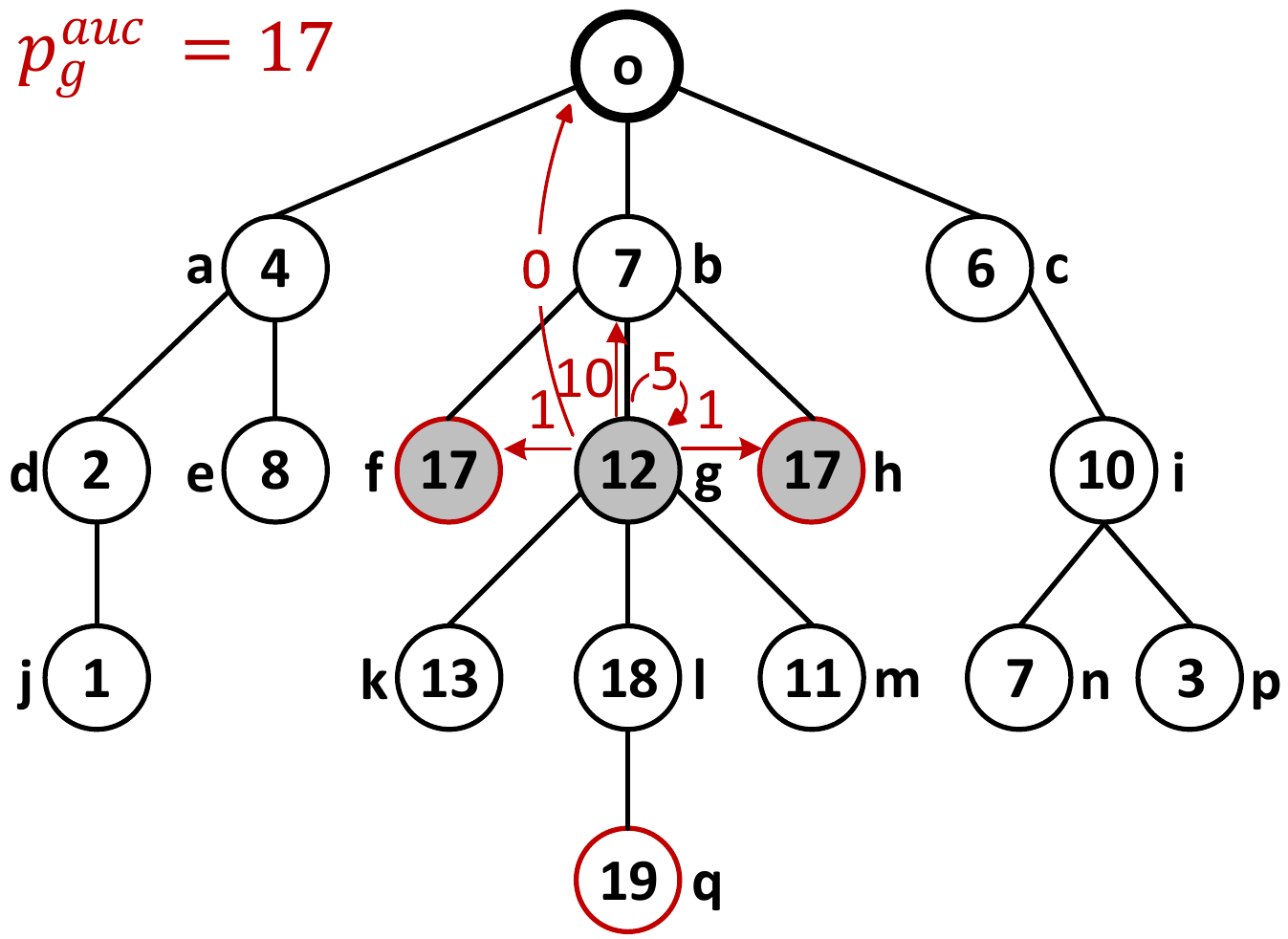}}%
	\subfigure[]{%
		\label{chart:NRM_4}%
		\includegraphics[width=0.5\linewidth]{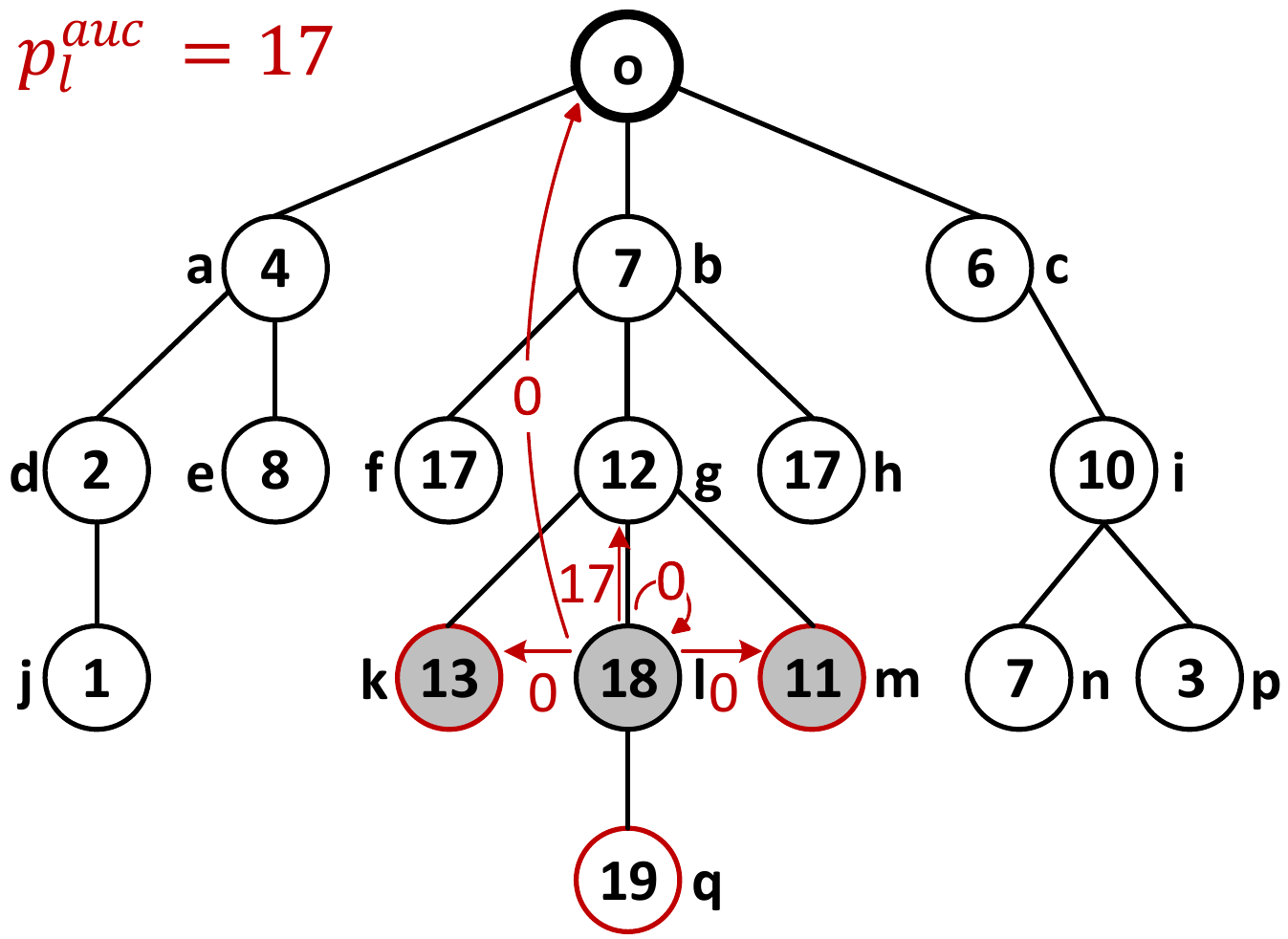}}\\
	\caption{(a) The ancestor sequence of agent $q$ and their siblings; (b)(c)(d) A running process of NRM in Trees.}
	\label{NRM_process}
\end{figure}

We take Figure~\ref{NRM_process} as an example. In Figure~\ref{chart:NRM_1}, the highest bidder is agent $q$ and the ancestor sequence is $A_q=\{b,g,l\}$, which are colored in green. Those purple nodes are siblings of the ancestors, i.e., $B_b=\{a,c\}$, $B_g=\{f,h\}$ and $B_l=\{k,m\}$. Figure~\ref{chart:NRM_2},~\ref{chart:NRM_3}and~\ref{chart:NRM_4} shows a running process of NRM, where each subfigure represents the computational process for each step. For each subfigure, the nodes in grey are the ancestor and its siblings we focus on in this step, and the nodes in red circle are the highest bidder in the subtree of the ancestor or its siblings. The value $P^{auc}$ on the left top is the required payment for the ancestor and the red arrows represent the payment transfer. In the first step in Figure~\ref{chart:NRM_2}, the required payment for agent $b$ is the highest reported valuation without her participation $p_b^{auc}=\hat{v}_i=10$. Since she is the first one in the ancestor sequence, the payment $p_b^{auc}$ can be directly used to redistribute among the siblings and herself. The total number of agents in the subtrees of $a$, $b$ and $c$ is $16$. For agent $a$, without her participation the new required payment is also $10$ and the number of agents in her subtree is $n_a=4$, thus the money redistributed to her is $R_a=\frac{4*10}{16}=2.5$. For agent $b$, the new required payment becomes $8$ if she quits the mechanism and $n_b=8$, thus the money redistributed is $R_b=\frac{8*8}{16}=4$. Similarly, we have $R_c=\frac{4*8}{16}=2$. Then the surplus $p_b^{auc}-R_a-R_b-R_c=10-2.5-4-2=1.5$ will be given to the owner. In the second step in Figure~\ref{chart:NRM_3}, we have $p_g^{auc}=17$. Since $p_b^{auc}=10$, the money first compensates agent $b$'s payment. Then the payment difference $p_g^{auc}-p_b^{auc}=17-10=7$ will be redistributed among $f$, $g$ and $h$. For each $f$, $g$ and $h$, the new required payment difference without their participation is also $7$. Thus, we have $R_f=R_h=\frac{1*7}{7}=1$ and $R_g=\frac{5*7}{7}=5$. In this step the surplus to owner is zero since $p_g^{auc}-p_b^{auc}-R_f-R_g-R_h=0$. In the third step in Figure~\ref{chart:NRM_4}, $p_l^{auc}=17$ is equal to $p_g^{auc}$. Thus the money will all be used to compensate agent $g$'s payment and remain nothing for redistribution. Thus we have $R_k=R_l=R_m=0$. Since $\hat{v}_l=18>p_l^{auc}$, the item will be allocated to agent $l$. Till now, the NRM runs over. The winner is agent $l$ and the surplus is $S=1.5$. Compared to the classical Cavallo mechanism, the owner allocates the item to agent $b$ with social welfare $\hat{v}_b=7$ and only three agents $a$, $b$ and $c$ have positive utilities while in NRM the social welfare is $18$ and $7$ agents have positive utilities. Therefore, our mechanism achieves a more efficient allocation and more agents have positive utilities.

\subsection{Properties of NRM}

In what follows, we show that our NRM satisfies all the desirable properties of IR, IC, ND and ABB in trees. Also, the allocation is more efficient than traditional Cavallo mechanism among the neighbours. 

\begin{theorem}\label{thm:IR}
	The network-based redistribution mechanism in trees is individually rational.
\end{theorem}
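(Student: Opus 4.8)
The plan is to verify $u_i(\theta_i,\hat\theta)=\pi_i(\hat\theta)v_i-p_i(\hat\theta)\ge 0$ agent by agent, under the IR hypothesis that $i$ reports her valuation truthfully ($\hat v_i=v_i$) while possibly concealing neighbours. I would first partition $V$ according to the role each agent plays in Algorithm~\ref{NRM-T}: (i) agents that are never processed, i.e.\ those lying outside the ancestor sequence $A_h$ and outside all sibling sets; (ii) the processed non-winners, namely the siblings in each $B_{a_j}$ together with the ancestors $a_j$ that fail the winning test; and (iii) the unique winner. For group (i) the mechanism leaves $\pi_i=0$ and $p_i=0$, so $u_i=0$ and IR is immediate. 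For group (ii) we have $\pi_k=0$ and $p_k=-R_k$, hence $u_k=R_k$, so it suffices to prove $R_k\ge0$. For the winner $a_j$ the mechanism sets $\pi_{a_j}=1$ and $p_{a_j}=\hat v_{V\setminus V_{a_j}}^{(1)}-R_{a_j}$; using the winning condition $\hat v_{a_j}\ge \hat v_{V\setminus V_{a_j}}^{(1)}$ together with truthfulness $v_{a_j}=\hat v_{a_j}$, the utility equals $v_{a_j}-\hat v_{V\setminus V_{a_j}}^{(1)}+R_{a_j}\ge R_{a_j}$, so again everything reduces to the single inequality $R_k\ge0$.

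The heart of the argument is therefore the claim that every redistribution $R_k=\frac{n_k}{n_X}\cdot S_{-k}$ is non-negative; since the subtree counts satisfy $n_k,n_X\ge 0$, this amounts to $S_{-k}\ge0$. When the guard $a_{j-1}=a'_{j-1}$ fails we have $S_{-k}=0$ by definition and there is nothing to prove, so I would concentrate on the other branch, where $S_{-k}=\hat v_{V\setminus\{V_{a'_j}\cup V_k\}}^{(1)}-p_{a_{j-1}}^{auc}$. Here I would invoke the identity $p_{a_{j-1}}^{auc}=\hat v_{V\setminus V_{a_{j-1}}}^{(1)}$ recorded at line~9 of the previous iteration (with the base-case convention $p_{a_0}^{auc}=0=\hat v_{\emptyset}^{(1)}$ when $j=1$), thereby reducing $S_{-k}\ge0$ to a comparison of two highest valuations taken over different agent sets.

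The key structural fact, and the step I expect to be the main obstacle, is the set inclusion $V_{a'_j}\cup V_k\subseteq V_{a_{j-1}}$: it yields $V\setminus V_{a_{j-1}}\subseteq V\setminus\{V_{a'_j}\cup V_k\}$ and hence, by monotonicity of the highest valuation under set inclusion, $\hat v_{V\setminus\{V_{a'_j}\cup V_k\}}^{(1)}\ge \hat v_{V\setminus V_{a_{j-1}}}^{(1)}=p_{a_{j-1}}^{auc}$, giving $S_{-k}\ge0$. To justify the inclusion I would argue that by construction every $k\in X=B_{a_j}\cup\{a_j\}$ is a child neighbour of $a_{j-1}$, so $V_k\subseteq V_{a_{j-1}}$, and that the active guard $a_{j-1}=a'_{j-1}$ forces $a'_j$ to be a child of $a_{j-1}$ as well, since $a'_j$ is the depth-$j$ vertex on the path $A'=(o,A_{h'},h')$ to $h'$ whose depth-$(j-1)$ ancestor coincides with $a_{j-1}$; hence $V_{a'_j}\subseteq V_{a_{j-1}}$ too. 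Because the tree structure guarantees that neither sibling subtree contains $a_{j-1}$ itself, the union is contained in $V_{a_{j-1}}$. Some care is needed to confirm that $a'_j\neq k$ (otherwise $h'\in V_k$, contradicting $h'\in\arg\max_{i\in V\setminus V_k}\hat v_i$) and to check that the depth indexing of $A'$ aligns the index $j$ consistently across the two ancestor sequences; once this bookkeeping is settled, the monotonicity comparison closes all three cases and establishes IR.
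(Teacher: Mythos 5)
Your proof follows essentially the same route as the paper's: the same partition into the winner, the non-winning ancestors, the siblings, and the untouched agents, with the winner's case handled via the winning condition $\hat v_{w}\ge \hat v_{V\setminus V_{w}}^{(1)}$ (plus truthful reporting) and every other non-trivial case reduced to $R_k\ge 0$. The only substantive difference is that the paper simply asserts $R_i(\hat\theta)\ge 0$ without justification, whereas you actually establish it by showing $S_{-k}\ge 0$ through the identity $p_{a_{j-1}}^{auc}=\hat v_{V\setminus V_{a_{j-1}}}^{(1)}$, the inclusion $V_{a'_j}\cup V_k\subseteq V_{a_{j-1}}$, and monotonicity of the maximum — a step the published proof glosses over and that your version correctly supplies.
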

\begin{proof}
	According to the algorithm of NRM, for each sibling $i$ of ancestors, they are not required to pay money and they will receive the money redistributed. Thus we have $u_i(\theta_i,\hat{\theta})=\pi_i(\hat{\theta})v_i-p_i(\hat{\theta})=-p_i(\hat{\theta})=R_i(\hat{\theta})\geq0$. For each ancestor $i$ of the winner, although they are required to pay money for either passing or keeping the item, their payment will be compensated by the next ancestor in the sequence. Thus they will be only redistributed the money from the mechanism, i.e., $u_i(\theta_i,\hat{\theta})=\pi_i(\hat{\theta})v_i-p_i(\hat{\theta})=-p_i(\hat{\theta})=R_i(\hat{\theta})\geq0$. For the winner $w$ of the item, her valuation must be greater than or equal to her required payment according to the allocation condition. Together with the redistributed money, her utility is $u_w(\theta_w,\hat{\theta})=\pi_w(\hat{\theta})v_w-p_w(\hat{\theta})=v_w-p_w^{auc}+R_i(\hat{\theta})\geq R_i(\hat{\theta})\geq0$. All the other agents pay nothing. Thus, agents' utilities in NRM are non-negative and the mechanism is individually rational.
\end{proof}

Theorem~\ref{thm:IR} proves that all the agents participating the mechanism will not have negative utilities as long as they report their valuation truthfully, which is the basic requirement for the mechanism. Then we prove that our mechanism is truthful.

\begin{theorem}
	The network-based redistribution mechanism in trees is incentive compatible.
\end{theorem}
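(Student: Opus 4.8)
The plan is to decompose any single-agent deviation into a pure diffusion deviation followed by a pure valuation deviation, and to handle each separately. Fix agent $i$ with true type $\theta_i=(v_i,r_i)$ and an arbitrary misreport $\hat{\theta}_i'=(\hat{v}_i',\hat{r}_i')$. Since the generated graph $G(\hat{\theta})$ --- and hence the set of agents who are reachable --- depends only on the reported neighbour sets and not on the reported valuations, the profile induced by $(v_i,\hat{r}_i')$ and the profile induced by $(\hat{v}_i',\hat{r}_i')$ share exactly the same tree. I would therefore prove the chain $u_i(\theta_i,\hat{\theta})\ge u_i(\theta_i,(v_i,\hat{r}_i'))\ge u_i(\theta_i,\hat{\theta}')$, where the first inequality is \emph{diffusion truthfulness} under the true valuation and the second is \emph{valuation truthfulness} on the fixed tree determined by $\hat{r}_i'$. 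Establishing both inequalities yields IC against the combined deviation.

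For valuation truthfulness I would first show that, on a fixed tree, agent $i$'s reported valuation influences only \emph{whether} she receives the item, and neither her critical price nor her redistribution. The price $p_i^{auc}=\hat{v}_{V\setminus V_i}^{(1)}$ is the highest report \emph{outside} $V_i$, so it is independent of $\hat{v}_i$. For the reward $R_i=\frac{n_i}{n_X}S_{-i}$, the size $n_i$ is structural, and $S_{-i}$ is computed from the highest report on $V\setminus\{V_{a'_j}\cup V_i\}$, which again excludes $V_i$; moreover the group $X=B_{a_j}\cup\{a_j\}$ is exactly the set of children of $i$'s parent, so it (and thus $n_X$) is the same set whether $i$ or one of her siblings is the distinguished ancestor. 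Hence $R_i$ is invariant to $\hat{v}_i$ \emph{even across} the boundary where $i$ switches between being the winner, an ancestor, and a sibling. With $R_i$ and $p_i^{auc}$ fixed, the utility reduces to $\pi_i(\hat{\theta})(v_i-\hat{v}_{V\setminus V_i}^{(1)})+R_i$, the allocation $\pi_i$ is monotone non-decreasing in $\hat{v}_i$, and $i$ wins exactly when $\hat{v}_i\ge\hat{v}_{V\setminus V_i}^{(1)}$; the standard critical-value argument then shows truthful reporting is optimal.

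For diffusion truthfulness I would argue that withholding an invitation can only delete agents that lie \emph{inside} $V_i$. Consequently $p_i^{auc}=\hat{v}_{V\setminus V_i}^{(1)}$ is unchanged (the deleted agents never belonged to $V\setminus V_i$), so $i$'s affordability condition $\hat{v}_i\ge\hat{v}_{V\setminus V_i}^{(1)}$, and therefore her prospects of winning, are unaffected --- in particular withholding can never turn a losing agent into a winner. At the same time $S_{-i}$ is unchanged for the same exclusion reason, while $n_i$ and $n_X$ both drop by the number of deleted descendants; since $n_i\le n_X$, the ratio $n_i/n_X$ weakly decreases and hence $R_i$ weakly decreases. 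Inviting all neighbours therefore weakly dominates any partial invitation, which is exactly the monotone dependence of the reward on subtree size that the mechanism is built around.

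I expect the main obstacle to be the bookkeeping across the \emph{structural} changes a deviation induces, rather than any single inequality. The delicate case is when $i$ is an ancestor who withholds the subtree containing the highest bidder $h$: this changes the global winner, the active ancestor sequence, and the sibling groups, so one must verify that $i$'s role can only degrade (from ancestor to sibling, or to an uninvolved agent) and never improves into a profitable win. I would handle this by showing that after the deletion the new highest bidder is either still in $V_i$ --- leaving the path through $i$ but with a smaller $n_i$ --- or outside $V_i$, in which case $i$ leaves the path and collects at most what a sibling or uninvolved agent collects; in both cases her reward does not exceed the original $R_i$, and she still cannot win because her threshold $\hat{v}_{V\setminus V_i}^{(1)}$ is untouched. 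The second point needing care is the precise verification that $X$ equals the children of $i$'s parent and that $S_{-i}$ is genuinely independent of $i$'s own report, since the whole valuation-truthfulness argument hinges on $R_i$ being constant across $i$'s role changes.
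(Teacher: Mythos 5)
Your proposal is correct and takes essentially the same route as the paper: the identical two-step decomposition (diffusion truthfulness under the true valuation, then valuation truthfulness on the fixed tree), resting on the same two facts --- that $p_i^{auc}=\hat{v}_{V\setminus V_i}^{(1)}$ and $S_{-i}$ are computed entirely from reports outside $V_i$, while the share $n_i/n_X$ is weakly monotone in the invited subtree --- with your invariance-lemma-plus-critical-value packaging replacing the paper's role-by-role case analysis (winner, ancestor, sibling, others). The one statement to patch is that $i$ wins ``exactly when'' $\hat{v}_i\ge\hat{v}_{V\setminus V_i}^{(1)}$: the loop may break at an ancestor $a_{j'}$ who meets her own winning condition $\hat{v}_{a_{j'}}\ge\hat{v}_{V\setminus V_{a_{j'}}}^{(1)}$ before reaching $i$, but since that condition involves only reports outside $V_{a_{j'}}\supseteq V_i$ it is unaffected by any deviation of $i$, so the critical-value argument goes through conditional on the loop reaching $i$'s level.
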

\begin{proof}
	As defined in the extended IC, all agents are required not only to report their truthful valuation but also to invite all their neighbours. Here we prove the theorem in two steps. First, fix whatever valuation for each agent, we prove that inviting all the neighbours is the dominant strategy. Next, fix whatever neighbours invited by each agent, we prove that reporting the truthful valuation is the dominant strategy. Thereby, for each agent, both reporting the truthful valuation and inviting all the neighbours is the dominant strategy.
	
	In NRM, all the agents can be divided into four categories: the winner, winner's ancestors, siblings of the ancestors and the others. Only the agents in the first three categories will gain non-zero utilities.
	
	For the winner $w$, her utility is $u_w(\theta_w,\hat{\theta})=\pi_w(\hat{\theta})v_w-p_w(\hat{\theta})=v_w-p_w^{auc}+R_w$. First, assume that her reported valuation is fixed and the neighbours she invited is $\hat{r}_w\subset r_w$. According to the allocation condition, no matter how many neighbours she invites, she will be still the winner since her valuation is at least equal to the required payment. The term $v_w-p_w^{auc}$ remains the same. However, since $R_w$ is a monotone increasing function to the number of the descendants, $R_w$ will decrease if inviting fewer neighbours, which leads to a lower utility. Next, assume that her neighbours invited is fixed and her reported valuation is $\hat{v}_w\not=v_w$. If she is still the winner, her utility remains the same since it is not related to her reported valuation. If she becomes an ancestor of the new winner or the siblings of the ancestor, her utility will only consist the redistributed part $R_w\leq v_w-p_w^{auc}+R_w$, which is lower than the utility of being the winner.
	
	For the winner's ancestor $i$, her utility is $u_i(\theta_i,\hat{\theta})=\pi_i(\hat{\theta})v_i-p_i(\hat{\theta})=R_i$. First, assume that her reported valuation is fixed and the neighbours she invited is $\hat{r}_i\subset r_i$. According to the allocation condition, no matter how many neighbours she invites, she cannot be the winner since her valuation is lower than the required payment, i.e., $v_w<p_w^{auc}$. If she is still the ancestor or becomes a sibling, her utility will decrease after inviting fewer neighbours since the total amount of the money to be redistributed will not increase and the $R_w$ is a monotone increasing function to the number of the descendants. Next, assume that her neighbours invited is fixed and her reported valuation is $\hat{v}_i\not=v_i$. She has no chance to be the sibling. If she becomes the winner, her utility will be $v_i-p_i^{auc}+R_i<R_i$, which is lower than that of reporting truthfully. If she is still the ancestor, her utility will not change no matter what valuation she reports. 
	
	For the sibling of the ancestors $i$, her utility is $u_i(\theta_i,\hat{\theta})=\pi_i(\hat{\theta})v_i-p_i(\hat{\theta})=R_i$. First, assume that her reported valuation is fixed and the neighbours she invited is $\hat{r}_i\subset r_i$. She has no chance to be the winner or the ancestor according to the allocation condition. If she is still the sibling, her utility will decrease after misreporting since the total amount of the money to be redistributed will not increase and the $R_w$ is a monotone increasing function to the number of the descendants. Next, assume that her neighbours invited is fixed and her reported valuation is $\hat{v}_i\not=v_i$. She has no chance to be an ancestor. If she becomes the winner, her required payment will be the highest valuation without her participation, which is higher than her valuation. Thus her utility will decrease because $v_i-p_i^{auc}+R_i<R_i$. If she is still the sibling, her utility remains unchanged.
	
	For any other agent $i$, her utility is $0$. First, assume that her reported valuation is fixed and the neighbours she invited is $\hat{r}_i\subset r_i$. The allocation will not change and she cannot become the winner, the ancestor or the sibling. So she will still gain nothing. Next, assume that her neighbours invited is fixed and her reported valuation is $\hat{v}_i\not=v_i$. The only possible way to gain something through misreporting valuation is to report a higher valuation and become the winner. However, the money she is required to pay must be higher than her valuation and the money redistributed to her must be zero. Thus her utility is negative.
	
	Accordingly, NRM is incentive compatible since all the agents have no incentive to either misreporting their valuation or inviting fewer neighbours.
\end{proof}

Next, we show that the resource owner will never pay some extra money for the allocation in our mechanism and the surplus after redistribution among the agents will go to zero asymptotically when the number of participants goes to infinite. Moreover, our mechanism can achieve a more efficient allocation compared to the Cavallo mechanism.

\begin{theorem}
	The network-based redistribution mechanism in trees runs no deficit.
\end{theorem}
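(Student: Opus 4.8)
The plan is to show $S(\hat\theta)\ge 0$ by proving that every iteration of the outer loop contributes a non-negative amount. Collecting the updates on line~20, the returned surplus can be written as
$$S(\hat\theta)=\sum_{j}\Big(S_{a_j}-\sum_{k\in X_j}R_k\Big),$$
where $X_j=B_{a_j}\cup\{a_j\}$ (which is exactly the set $r_{a_{j-1}}^c$ of child neighbours of $a_{j-1}$) and the outer sum ranges over the ancestors processed up to the \textbf{break}. It therefore suffices to establish, for each processed level $j$, the per-step inequality $\sum_{k\in X_j}R_k\le S_{a_j}$.

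First I would record a monotonicity fact. Since the nodes $a_0=o,a_1,\dots,a_{k+1}=h$ all lie on a single root-to-$h$ path, their subtrees are nested: $a_j$ is a descendant of $a_{j-1}$, so $V_{a_j}\subseteq V_{a_{j-1}}$ and hence $V\setminus V_{a_j}\supseteq V\setminus V_{a_{j-1}}$. Taking the maximum reported valuation over the larger set gives $p_{a_j}^{auc}=\hat v_{V\setminus V_{a_j}}^{(1)}\ge \hat v_{V\setminus V_{a_{j-1}}}^{(1)}=p_{a_{j-1}}^{auc}$, with the boundary value $p_{a_0}^{auc}=0$ dominated because valuations are non-negative. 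Consequently $S_{a_j}=p_{a_j}^{auc}-p_{a_{j-1}}^{auc}\ge 0$.

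Next, because $R_k=\frac{n_k}{n_X}\,S_{-k}$ and $\sum_{k\in X_j}n_k=n_X$, the sum $\sum_{k\in X_j}R_k$ is a weighted average of the values $\{S_{-k}\}_{k\in X_j}$ with non-negative weights summing to one. Hence it is enough to prove the single inequality $S_{-k}\le S_{a_j}$ for every $k\in X_j$; the weighted sum then collapses to $\sum_{k\in X_j}R_k\le S_{a_j}$, which combined with the display above yields $S(\hat\theta)\ge 0$.

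The core of the argument, and where I expect the real work, is the inequality $S_{-k}\le S_{a_j}$. If $S_{-k}$ takes the ``otherwise'' value $0$ it holds trivially since $S_{a_j}\ge 0$. In the main branch $S_{-k}=\hat v_{V\setminus\{V_{a'_j}\cup V_k\}}^{(1)}-p_{a_{j-1}}^{auc}$, and as $S_{a_j}=\hat v_{V\setminus V_{a_j}}^{(1)}-p_{a_{j-1}}^{auc}$, the claim reduces to the set inclusion $V\setminus\{V_{a'_j}\cup V_k\}\subseteq V\setminus V_{a_j}$, i.e.\ $V_{a_j}\subseteq V_{a'_j}\cup V_k$, after which the maximum over the smaller set is no larger. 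To verify the inclusion I would split on whether $k=a_j$. If $k=a_j$ it is immediate, since $V_{a_j}\subseteq V_{a'_j}\cup V_{a_j}$. If $k\ne a_j$, then $k$ is a genuine sibling in $B_{a_j}$ whose subtree does not contain the global highest bidder $h$; deleting $V_k$ therefore leaves $h$ as the highest remaining bidder, so $h'=h$, $A'=A_h$, and in particular $a'_j=a_j$, again giving $V_{a_j}\subseteq V_{a'_j}\cup V_k$. The key observation powering this last case -- that removing a sibling subtree lying off the winner's path does not change who the winner is -- is the crux of the proof; the boundary level $j=1$ (where $p_{a_0}^{auc}=0$ and $a'_0=o$ automatically lies on $A'$) must be checked but goes through identically.
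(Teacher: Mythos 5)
Your proof takes essentially the same route as the paper's: decompose the surplus into per-level contributions and bound $\sum_{k\in X}R_k$ as a weighted average of the values $S_{-k}$, each of which is at most $S_{a_j}=p_{a_j}^{auc}-p_{a_{j-1}}^{auc}$, so every level contributes a non-negative amount. The only difference is that you explicitly justify the key inequality $S_{-k}\le S_{a_j}$ (via the inclusion $V_{a_j}\subseteq V_{a'_j}\cup V_k$ and the observation that deleting a sibling subtree off the winner's path leaves $h'=h$), a step the paper merely asserts.
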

\begin{proof}
	According to the process of NRM, in each step, the ancestor $a_i$ pays the money required $p_{a_i}^{auc}=\hat{v}_{V\setminus V_{a_i}}^{(1)}$ and shares the remaining part after compensation $p_{a_i}^{auc}-p_{a_{i-1}}^{auc}$ among $X=B_{a_i}\cup a_i$. The total money redistributed is 
	\begin{align*}
		\sum_{k\in X}R_k&=\sum_{k\in X}\frac{n_k}{n_X}\cdot S_{-k}\\&\leq\sum_{k\in X}\frac{n_k}{n_X}\cdot (p_{a_i}^{auc}-p_{a_{i-1}}^{auc})\\&=p_{a_i}^{auc}-p_{a_{i-1}}^{auc}
	\end{align*}
	
	Thus, the required payment can cover the compensation and the money redistributed. So NRM runs no deficit.
\end{proof}

\begin{theorem}
	The network-based redistribution mechanism in trees is asymptotically budget-balanced.
\end{theorem}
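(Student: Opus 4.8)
The plan is to reuse the per-step decomposition of the surplus that already appears in the non-deficit proof and to show, step by step, that what the owner keeps is a vanishing fraction of a globally bounded quantity. Writing $A=(a_0,\dots,a_{k+1})$ with $a_0=o$ and letting $a_w$ be the winning ancestor, the algorithm accumulates $S(\hat\theta)=\sum_{j=1}^{w}\Delta_j$ with $\Delta_j=S_{a_j}-\sum_{k\in X_j}R_k=\sum_{k\in X_j}\frac{n_k}{n_{X_j}}\bigl(D_j-S_{-k}\bigr)$, where $D_j=p_{a_j}^{auc}-p_{a_{j-1}}^{auc}$ and $X_j=B_{a_j}\cup\{a_j\}$. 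First I would record two facts that are immediate from the definitions: each $\Delta_j\ge0$ (this is exactly the non-deficit inequality already established), and the $D_j$ telescope, so $\sum_{j=1}^{w}D_j=p_{a_w}^{auc}-p_{a_0}^{auc}=\hat v_{V\setminus V_{a_w}}^{(1)}\le \hat v_{V}^{(1)}$. Thus the total money fed into the redistribution stage is bounded by a single highest reported valuation, independently of how many agents join.

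The key structural step is to show that in each step at most two agents contribute to $\Delta_j$. If $D_j>0$, then the agent $g$ realizing $\hat v_{V\setminus V_{a_j}}^{(1)}$ must lie in $V_{a_{j-1}}$ but outside $V_{a_j}$: otherwise $\hat v_g\le p_{a_{j-1}}^{auc}$ would force $D_j=0$. Hence $g$ sits in exactly one sibling subtree $V_{k_0}$ with $k_0\in B_{a_j}$, while the global top bidder $h$ lies in $V_{a_j}$ since $a_j$ is on its ancestor path. Removing any $V_k$ with $k\notin\{a_j,k_0\}$ leaves both $h$ and $g$ present, so the recomputed winner is $h$, $a'_j=a_j$, and $S_{-k}=\hat v_{V\setminus(V_{a_j}\cup V_k)}^{(1)}-p_{a_{j-1}}^{auc}=\hat v_{V\setminus V_{a_j}}^{(1)}-p_{a_{j-1}}^{auc}=D_j$; such agents contribute $0$. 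For the two remaining agents one checks that removing $V_{a_j}$ sends the item to $g$ and removing $V_{k_0}$ sends it to $h$, so that $S_{-a_j}$ and $S_{-k_0}$ both equal $\hat v_{V\setminus(V_{a_j}\cup V_{k_0})}^{(1)}-p_{a_{j-1}}^{auc}$. Consequently $D_j-S_{-a_j}=D_j-S_{-k_0}=\delta_j$ with $\delta_j:=\hat v_{V\setminus V_{a_j}}^{(1)}-\hat v_{V\setminus(V_{a_j}\cup V_{k_0})}^{(1)}$, which collapses the step surplus to $\Delta_j=\delta_j\cdot\frac{n_{a_j}+n_{k_0}}{n_{X_j}}$; and since $V_{a_j}\cup V_{k_0}\subseteq V_{a_{j-1}}$ one also obtains $\delta_j\le D_j$.

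With this in hand the conclusion is short. Setting $\rho_j:=\frac{n_{a_j}+n_{k_0}}{n_{X_j}}\in(0,1]$, I would write $S(\hat\theta)=\sum_{j=1}^{w}\delta_j\rho_j\le\bigl(\max_j\rho_j\bigr)\sum_{j=1}^{w}\delta_j\le\bigl(\max_j\rho_j\bigr)\sum_{j=1}^{w}D_j\le\bigl(\max_j\rho_j\bigr)\,\hat v_{V}^{(1)}$. It then remains to argue that, as the number of participants grows, $\max_j\rho_j\to0$: once each sibling group $X_j$ acquires many members with comparable subtrees, the two pivotal subtrees $V_{a_j}$ and $V_{k_0}$ occupy a vanishing fraction of $n_{X_j}$, and with the valuations (hence $\hat v_V^{(1)}$) held fixed the surplus is squeezed to $0$.

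I expect this last paragraph to be the main obstacle, and it is really a question of interpreting the ABB limit correctly. For a fixed finite tree $\rho_j$ need not be small — a binary chain already gives $\rho_j=1$ — so the statement is genuinely about a family of instances in which the population, and in particular the breadth of each sibling group, grows while the valuation range stays bounded. The delicate point is to state this growth regime precisely and to verify that $\max_j\rho_j$ vanishes uniformly across the (possibly lengthening) ancestor chain rather than merely for each fixed $j$. A secondary technicality is the treatment of ties in the structural step, where several agents may simultaneously realize $\hat v_{V\setminus V_{a_j}}^{(1)}$; these only add further zero-contribution terms under any fixed tie-breaking rule and leave the bound intact.
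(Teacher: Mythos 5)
Your proof follows essentially the same route as the paper's: both decompose the surplus step by step, observe that only the two members of $X_j$ whose subtrees contain the global top bidder and the realizer of $\hat v_{V\setminus V_{a_j}}^{(1)}$ contribute a nonzero term $D_j-S_{-k}$ (the paper writes these as $x_1$ and $x_2$ after sorting $X$ by subtree maxima), and collapse the step surplus to a population ratio $\rho_j=(n_{x_1}+n_{x_2})/n_{X_j}$ times a bounded valuation gap; your $\delta_j$ is identical to the paper's $\max(0,\hat v_{V_{x_2}}^{(1)}-p_{a_{j-1}}^{auc})-\max(0,\hat v_{V_{x_3}}^{(1)}-p_{a_{j-1}}^{auc})$. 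You go further in two respects worth keeping. First, your telescoping bound $S(\hat\theta)\le(\max_j\rho_j)\,\hat v_V^{(1)}$ controls the sum over all steps at once, whereas the paper only argues that each individual step's surplus vanishes and then concludes ABB --- which by itself does not rule out a total surplus bounded away from zero when the length of the ancestor chain also grows with $n$. Second, the obstacle you flag is real and is exactly where the paper's own proof is silent: its final display simply asserts $\lim_{n\to\infty}\frac{n_{x_1}+n_{x_2}}{n_X}(\cdots)=0$ with no argument, and indeed $\rho_j\to 0$ holds only under an implicit assumption on how the network grows (each sibling group acquiring many members with comparably sized subtrees), not for arbitrary sequences of trees. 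So your proposal is not missing anything the paper supplies; it makes explicit a hypothesis the paper leaves implicit, and your honest treatment of ties and of the growth regime is, if anything, more careful than the published argument.
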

\begin{proof}
	In each step, the money is redistributed among the ancestor $a_i$ and her siblings $B_{a_i}$. Let the $X=B_{a_i}\cup\{a_i\}=\{x_1,x_2,\cdots,x_m\}$, where $\hat{v}_{V_{x_1}}^{(1)}\geq\hat{v}_{V_{x_2}}^{(1)}\geq\cdots\geq\hat{v}_{V_{x_m}}^{(1)}$.  
	The amount of the money redistributed is 
	\begin{align*}
		&\scalebox{1}{$\sum_{k\in X}R_k$}\\
		=&\scalebox{1}{$\sum_{k\in X}\frac{n_k}{n_X}\cdot S_{-k}$}\\
		=&\scalebox{1}{$\frac{n_{x_1}}{n_X}\cdot \max(0,\hat{v}_{V_{x_3}}^{(1)}-p_{a_{i-1}}^{auc})+\frac{n_{x_2}}{n_X}\cdot\max(0, \hat{v}_{V_{x_3}}^{(1)}-p_{a_{i-1}}^{auc})$}\\
		&\scalebox{1}{$+\frac{n_{x_3}}{n_X}\cdot \max(0,\hat{v}_{V_{x_2}}^{(1)}-p_{a_{i-1}}^{auc})+\cdots+\frac{n_{x_m}}{n_X}\cdot \max(0,\hat{v}_{V_{x_2}}^{(1)}-p_{a_{i-1}}^{auc})$}\\
		=&\scalebox{1}{$\lambda\cdot \max(0,\hat{v}_{V_{x_3}}^{(1)}-p_{a_{i-1}}^{auc})+(1-\lambda)\cdot \max(0,\hat{v}_{V_{x_2}}^{(1)}-p_{a_{i-1}}^{auc})$}
	\end{align*}
	where $\lambda=(n_{x_1}+n_{x_2})/n_X$. 
	
	As the number of participating agents $n$ goes to $\infty$, the surplus for this step is 
	\begin{align*}
		&\scalebox{1}{$\lim\limits_{n \to +\infty} (p_{a_i}^{auc}-p_{a_{i-1}}^{auc}-\sum_{k\in X}R_k)$}\\
		=&\scalebox{1}{$\lim\limits_{n \to +\infty} (\max(\hat{v}_{V_{x_2}}^{(1)},p_{a_{i-1}}^{auc})-p_{a_{i-1}}^{auc}$}\\
		&\scalebox{1}{$-\lambda\cdot \max(0,\hat{v}_{V_{x_3}}^{(1)}-p_{a_{i-1}}^{auc})-(1-\lambda)\cdot \max(0,\hat{v}_{V_{x_2}}^{(1)}-p_{a_{i-1}}^{auc}))$}\\
		=&\scalebox{1}{$\lim\limits_{n \to +\infty} (\max(0,\hat{v}_{V_{x_2}}^{(1)}-p_{a_{i-1}}^{auc})$}\\
		&\scalebox{1}{$-\lambda\cdot \max(0,\hat{v}_{V_{x_3}}^{(1)}-p_{a_{i-1}}^{auc})-(1-\lambda)\cdot \max(0,\hat{v}_{V_{x_2}}^{(1)}-p_{a_{i-1}}^{auc}))$}\\
		=&\scalebox{1}{$\lim\limits_{n \to +\infty} (\lambda\cdot\max(0,\hat{v}_{V_{x_2}}^{(1)}-p_{a_{i-1}}^{auc})-\lambda\cdot \max(0,\hat{v}_{V_{x_3}}^{(1)}-p_{a_{i-1}}^{auc}))$}\\
		=&\scalebox{1}{$\lim\limits_{n \to +\infty} \frac{n_{x_1}+n_{x_2}}{n_X}\cdot(\max(0,\hat{v}_{V_{x_2}}^{(1)}-p_{a_{i-1}}^{auc})- \max(0,\hat{v}_{V_{x_3}}^{(1)}-p_{a_{i-1}}^{auc}))$}\\
		=&\scalebox{1}{$0$}
	\end{align*}
	Thus in each step the surplus is asymptotically zero, so the NRM is asymptotically budget-balanced.
	
\end{proof}

\begin{theorem}\label{efficiency}
	The network-based redistribution mechanism in trees is at least as efficient as Cavallo mechanism among neighbours.
\end{theorem}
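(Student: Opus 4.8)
The plan is to compare the social welfare realized by the two mechanisms on the same underlying network. Under the Cavallo mechanism, Proposition~\ref{pro:incentive} shows that agents have no incentive to diffuse, so the owner can reach only her own neighbours $r_o$, and the realized allocation goes to the highest bidder among them, yielding social welfare $\hat{v}_{r_o}^{(1)}$. Under NRM, incentive compatibility guarantees that every agent invites all her neighbours, so the whole tree participates. Writing $w$ for the NRM winner and recalling that in a single-item allocation the social welfare is simply the winner's valuation, it suffices to prove $\hat{v}_w\geq\hat{v}_{r_o}^{(1)}$.

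First I would identify the winner. By the allocation condition in Algorithm~\ref{NRM-T}, the loop runs over the augmented sequence $A=(o,a_1,\dots,a_k,h)$ and allocates to the first agent $a_j$ with $\hat{v}_{a_j}\geq\hat{v}_{V\setminus V_{a_j}}^{(1)}$. Since $h$ is the global highest bidder, the inequality $\hat{v}_h\geq\hat{v}_{V\setminus V_h}^{(1)}$ holds unconditionally, so the loop always terminates with a well-defined winner $w=a_j$ that lies on the path from $o$ to $h$ and satisfies $\hat{v}_w\geq\hat{v}_{V\setminus V_w}^{(1)}$.

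The crux is a structural observation: the Cavallo winner lies outside the subtree $V_w$. Let $b\in\arg\max_{i\in r_o}\hat{v}_i$, so that $\hat{v}_b=\hat{v}_{r_o}^{(1)}$. Every owner-neighbour has depth one, and the only depth-one vertex on the path $o\to h$ is $a_1$; hence the only owner-neighbour that can be an ancestor of $w$ is $a_1$, while every other owner-neighbour roots a sibling branch and lies outside $V_{a_1}\supseteq V_w$. Consequently either $b=w$, giving the equality $\hat{v}_w=\hat{v}_b$, or $b\in V\setminus V_w$, in which case $b$ is among the agents over which the maximum defining $p_w^{auc}=\hat{v}_{V\setminus V_w}^{(1)}$ is taken, so $\hat{v}_w\geq\hat{v}_{V\setminus V_w}^{(1)}\geq\hat{v}_b$. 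In both cases $\hat{v}_w\geq\hat{v}_{r_o}^{(1)}$, which is exactly the comparison claimed.

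I expect the structural step to be the main obstacle. One must argue precisely, from the ancestor and subtree definitions, that the highest-valued owner-neighbour is never a strict interior descendant of the winner, and separately handle the degenerate case in which that neighbour is itself the NRM winner, where only equality rather than strict improvement holds. The termination argument, namely that the sweep always reaches an agent satisfying the allocation inequality (at the latest $h$), is routine but worth stating explicitly so that $w$ is well defined. Note that this proof deliberately does not claim $\hat{v}_w=\hat{v}_h$, since NRM need not allocate to the global highest bidder; the guarantee is precisely the weaker dominance over the neighbour-only Cavallo outcome, consistent with the later efficiency impossibility result.
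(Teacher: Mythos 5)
Your proof is correct and takes essentially the same route as the paper's one-line argument, namely combining the allocation condition $\hat{v}_w\geq \hat{v}_{V\setminus V_w}^{(1)}$ with a comparison against $\hat{v}_{r_o}^{(1)}$. In fact your explicit case split (either the top owner-neighbour $b$ equals $w$, or $b\in V\setminus V_w$) is more careful than the paper's, whose displayed chain $\hat{v}_{V\setminus V_w}^{(1)}\geq\hat{v}_{r_o}^{(1)}$ is not literally valid when $b$ lies inside $V_w$, even though the conclusion $\hat{v}_w\geq\hat{v}_{r_o}^{(1)}$ still holds there.
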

\begin{proof}
	According to the allocation condition, the winner is the agent whose reported valuation satisfies $\hat{v}_w\geq p_w^{auc}=\hat{v}_{V\setminus V_w}^{(1)}\geq\hat{v}_{r_o}^{(1)}$. Thus, NRM is at least as efficient as Cavallo mechanism among the owner's neighbours.
\end{proof}

\subsection{Efficiency Impossibility}
In what follows, we discuss the efficiency of redistribution mechanisms on networks. It seems that Theorem~\ref{efficiency} is not that strong since there are no further guarantees for the efficiency except for the improvement compared to the Cavallo mechanism among the neighbours. However, even if we ignore redistribution and only consider non-deficit (and IC, IR), efficiency approximation has not been found yet for diffusion settings~\cite{DBLP:conf/aaai/LiHZZ17,DBLP:conf/ijcai/LiHZY19}.

Let us now exhibit the negative result regarding the design of mechanisms for diffusion settings. The result shows that it is impossible to achieve any efficiency guarantee given that all the properties are satisfied.

\begin{proposition}\label{impo}
	There exists no mechanism in the diffusion setting which can guarantee any efficiency without sacrificing non-deficit, IR and IC.
\end{proposition}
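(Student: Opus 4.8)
The plan is to argue by contradiction. I would assume a mechanism $\mathcal{M}=(\pi,p)$ is simultaneously \textbf{ND}, \textbf{IR}, \textbf{IC}, and guarantees some positive fraction $\alpha>0$ of the optimal social welfare on \emph{every} instance, and then exhibit one family of instances on which these requirements are mutually inconsistent. The minimal witness is the line $o\to a\to b$ with $v_a=t$ and $v_b=V$, where $V$ is taken large (so $t<\alpha V$) and $t$ is allowed to range over $[0,\alpha V)$. The whole contradiction will be squeezed out of a single quantity: the reward $R_a$ handed to $a$ when she is a non-winning ancestor. Efficiency together with the valuation part of \textbf{IC} will force $R_a$ to be independent of $a$'s own valuation, while \textbf{ND} and the diffusion part of \textbf{IC} will force $R_a$ to equal $t$, and no constant can equal $t$ throughout an interval.

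Concretely I would carry out the following steps in order. (i) Apply the efficiency guarantee to the truncated instance $\{o,a\}$ obtained when $a$ declines to invite $b$: to keep the ratio at least $\alpha$ for every possible lone value, the item must go to $a$ for free, so $a$'s deviation utility is exactly $u_a(\text{cut})=v_a=t$. (ii) Apply the guarantee to the full instance: since $t<\alpha V$, the only allocation with welfare at least $\alpha V$ gives the item to $b$, so in truthful play $b$ wins and $a$ is a non-winning ancestor with $u_a=-p_a=R_a$. (iii) Use the valuation part of \textbf{IC} (a payment identity in the own-value coordinate): as long as $a$ keeps losing, her reward cannot depend on her reported value, so $R_a\equiv c$ for a constant $c$ over all reports in $[0,\alpha V)$. (iv) Use the diffusion part of \textbf{IC}, comparing truthful play to the cut of step (i): $c=R_a\ge u_a(\text{cut})=t$ for every $t<\alpha V$, hence $c\ge\alpha V$. (v) Use \textbf{ND} together with the winner's payment: efficiency and \textbf{IC} pin $b$'s payment to the critical value, which is at most the single competing report, so $p_b\le t$ (at most $t/\alpha$ under an approximate guarantee); since $S(\hat\theta)=p_b-R_a\ge 0$ this gives $c=R_a\le t<\alpha V$. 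Combining (iv) and (v), $\alpha V\le c<\alpha V$, a contradiction, and letting $V\to\infty$ shows the obstruction survives for every fixed $\alpha$, which is exactly the claimed non-existence.

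The step I expect to be the genuine obstacle is (iii), establishing that the ancestor's redistribution $R_a$ is independent of her own valuation. Unlike a classical single-parameter setting, an agent's type here is the pair $(v_i,\hat r_i)$, so I must run the envelope/payment argument only in the valuation coordinate while holding the invitation set fixed, and check that every intermediate report keeps $a$ in the same structural role (a non-winning ancestor of $b$) so that the allocation, and hence the argument, is undisturbed. A secondary care point is step (v) under an approximate rather than exact guarantee: there the critical value is bounded by $t/\alpha$ rather than $t$, so I would fix $t$ small enough that $t/\alpha<\alpha V$ still holds, keeping the squeeze in (iv)--(v) valid. Once independence and this bound are secured, the remaining arithmetic is immediate.
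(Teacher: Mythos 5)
Your proof is correct and rests on essentially the same witness as the paper's: the line $o\to a\to b$, where $a$'s outside option of cutting off $b$ (winning for free) forces her reward up to $v_a$, while non-deficit together with $b$'s critical-value payment forces it back down, with the gap closing as $v_b\to\infty$. Your version is in fact more careful than the paper's, which compresses your steps (iii)--(v) into the informal remark that ``their payments depend on their valuations, which violates IC,'' whereas you make the constancy of the losing ancestor's reward and the $t/\alpha$ bound on the winner's payment explicit.
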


\begin{proof}
	Consider a simple line graph in Figure~\ref{fig:worst}, where $o$ is the resource owner, agent $a$ has two neighbors $o$ and $b$, and $a$ and $b$'s valuations are $v_a$ and $v_b$ and $v_a<v_b$. If $a$ does not invite $b$, $a$ wins the resource and pays zero, so $a$ will only invite $b$ if $a$'s reward is at least $v_a$. If $a$ invites $b$, to achieve a more efficient allocation, $b$ wins and her payment should be not more than $v_b$; otherwise, it may violate IR. To achieve non-deficit, $b$ should pay at least what $a$ receives. Eventually, their payments depend on their valuations, which violates IC. Thus, no matter how large $v_b$ is, she cannot be allocated the item.
	
	That is, whatever the mechanism, it cannot guarantee any efficiency for all the network structures without sacrificing non-deficit, IR and IC.
\end{proof}

\begin{figure}[htbp]
	\centering
	\includegraphics[width=0.5\linewidth]{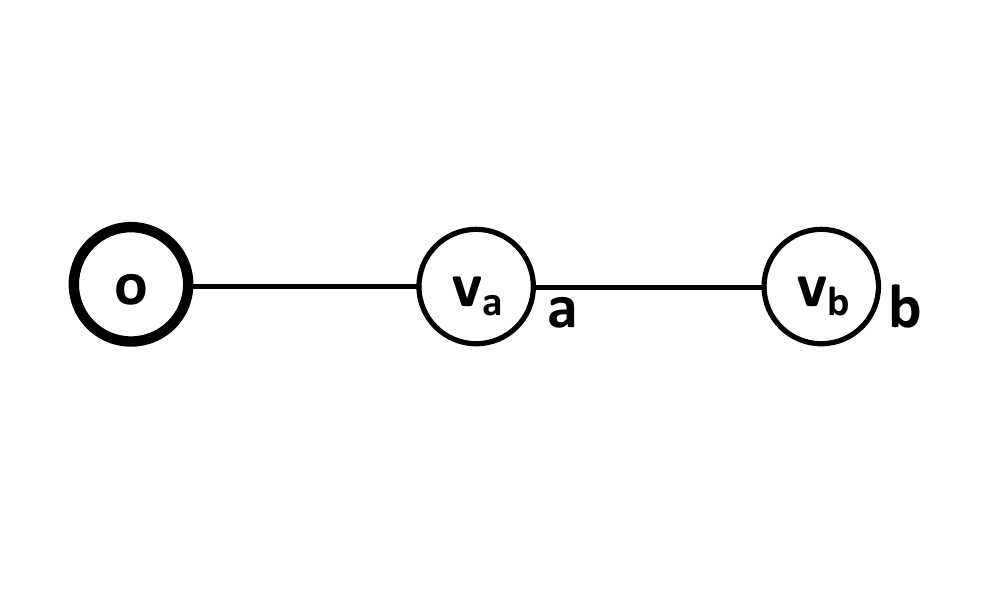}
	\caption{The worst case for efficiency.}
	\label{fig:worst}
\end{figure}

The simple setting presented in proposition~\ref{impo} is somewhat equivalent to the bilateral trading setting of one seller and one buyer studied by~\citeauthor{myerson1983efficient}~\shortcite{myerson1983efficient} where $a$ behaves like the seller and $b$ is the buyer since $a$ can easily get the item for free by not inviting anyone. Then their well-known impossibility theorem holds here (i.e. we cannot have efficiency, IC, IR and non-deficit at the same time). Even in bilateral trading settings, we haven't seen good non-deficit examples to approximate efficiency. The well-known example is McAfee's trade reduction for multiple buyers and multiple sellers, where efficiency is sacrificed to remove deficit, but the efficiency loss is diminished when the number of traders increases~\cite{mcafee1992dominant}. However, it still does not guarantee a lower bound of efficiency in general. In the worst case when there is only one seller and one buyer, it has no efficiency guarantee.

Therefore, the efficiency of a mechanism depends on the network. To the best of our knowledge, we are the very first to study the redistribution problem in the network setting. The benchmark used in our setting is the Cavallo mechanism, which achieves efficiency among the resource owner's neighbours. Since we have shown that generically, no non-deficit, IR and IC mechanism can guarantee any efficiency in the diffusion setting in Proposition~\ref{impo}, NRM breaks new ground for this problem as the efficiency is improved compared to Cavallo mechanism. 

\section{Redistribution Mechanism in Graphs}
\label{section:Mech_Graph}
In the previous section, we only studied the mechanism in tree structures. In real life, most social networks are common graphs. Hence in this section, we extend our NRM to more general cases without sacrificing all desirable properties.

Different from the tree cases, we extend the definitions and basic notations for the graph setting.

\begin{definition}
	Given a reported type profile $\hat{\theta}\in \Theta$ of all the agents and a generated common graph $G=(V,E)$, for each agent $i,j\in V$ if all the paths from $o$ to $j$ have to pass $i$, we say $i$ is $j$'s \textbf{ancestor} and $j$ is $i$'s \textbf{descendant}.
\end{definition}

\begin{itemize}[leftmargin= 20 pt]
	\item Let $A_i=(a_1,\cdots,a_k)$ be the \textbf{ancestor sequence} of agent $i$.
	\item Let $G_i=(V_i,E_i)$ be the \textbf{subgraph} of agent $i$.
	\item Let $B_{a_j}=r_{a_{j-1}}^c\setminus a_j$ be the \textbf{sibling set} of agent $a_j\in A_i$.
\end{itemize}

That is, an ancestor $a_j\in A_i$ for agent $i\in V$ is a cut-point from the seller to $i$. The subgraph of agent $i$ are those who cannot receive the information without $i$'s invitation. The siblings of an ancestor $a_j$ are the child neighbours of ancestor $a_{j-1}$ except $a_j$ herself.

Then the NRM can be simply extended in graphs by updating the definitions of the notations above in Algorithm~\ref{NRM-T}.

Intuitively, the network-based redistribution mechanism in graphs is a generalization of that in trees. The sibling set who share the money with an ancestor are the child neighbours of the last ancestor. Seemingly, it is quite different from the ancestor's brother neighbours with the same parent in tree cases. Actually, in tree structures, the ancestor herself is also one of the child neighbours of the last ancestor, which can be viewed as a special case of the common graphs.

\begin{figure}[htbp]
	\centering
	\subfigure[]{%
		\label{chart:NRM_5}%
		\includegraphics[width=0.5\linewidth]{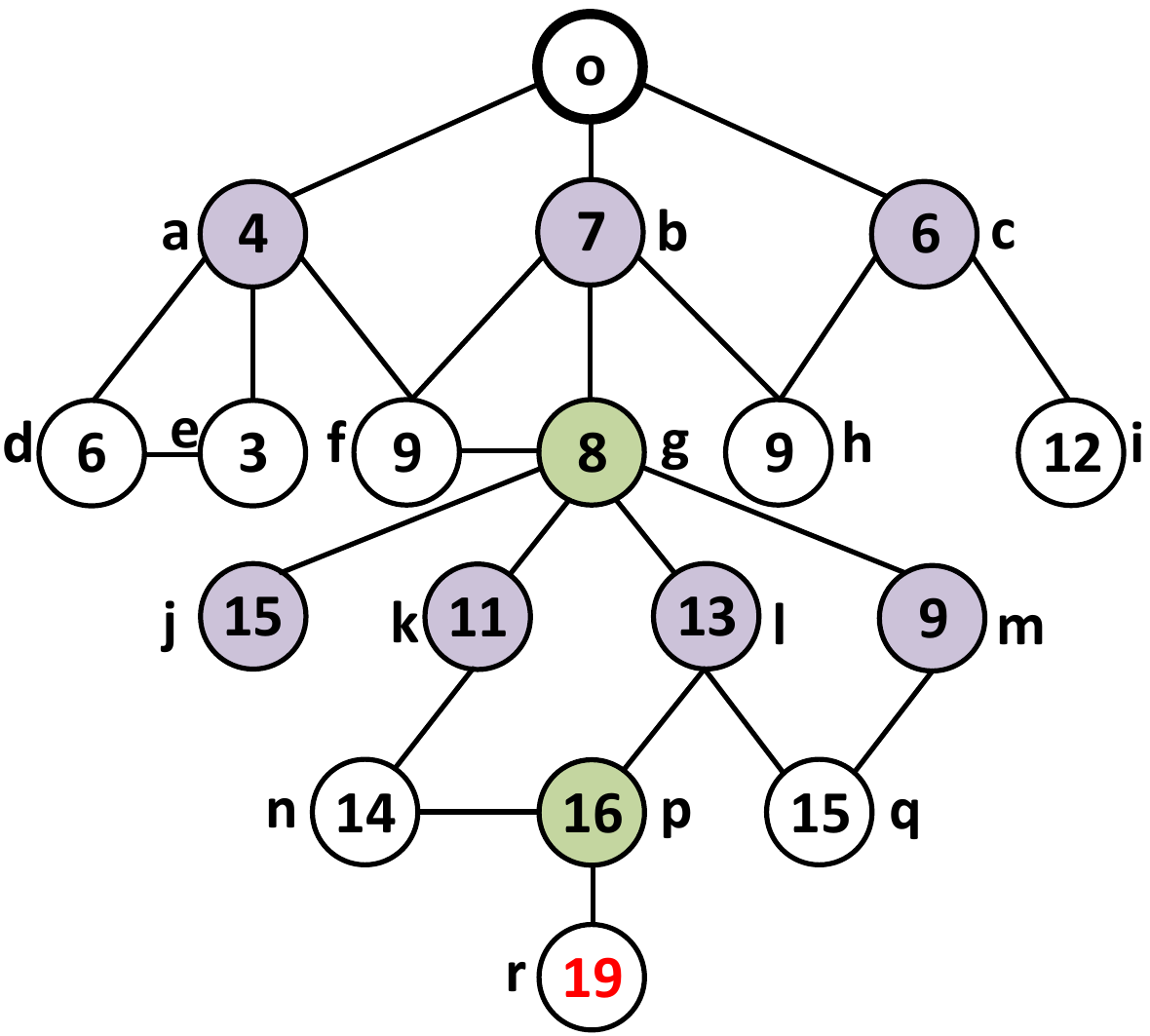}}
	\subfigure[]{%
		\label{chart:NRM_6}%
		\includegraphics[width=0.5\linewidth]{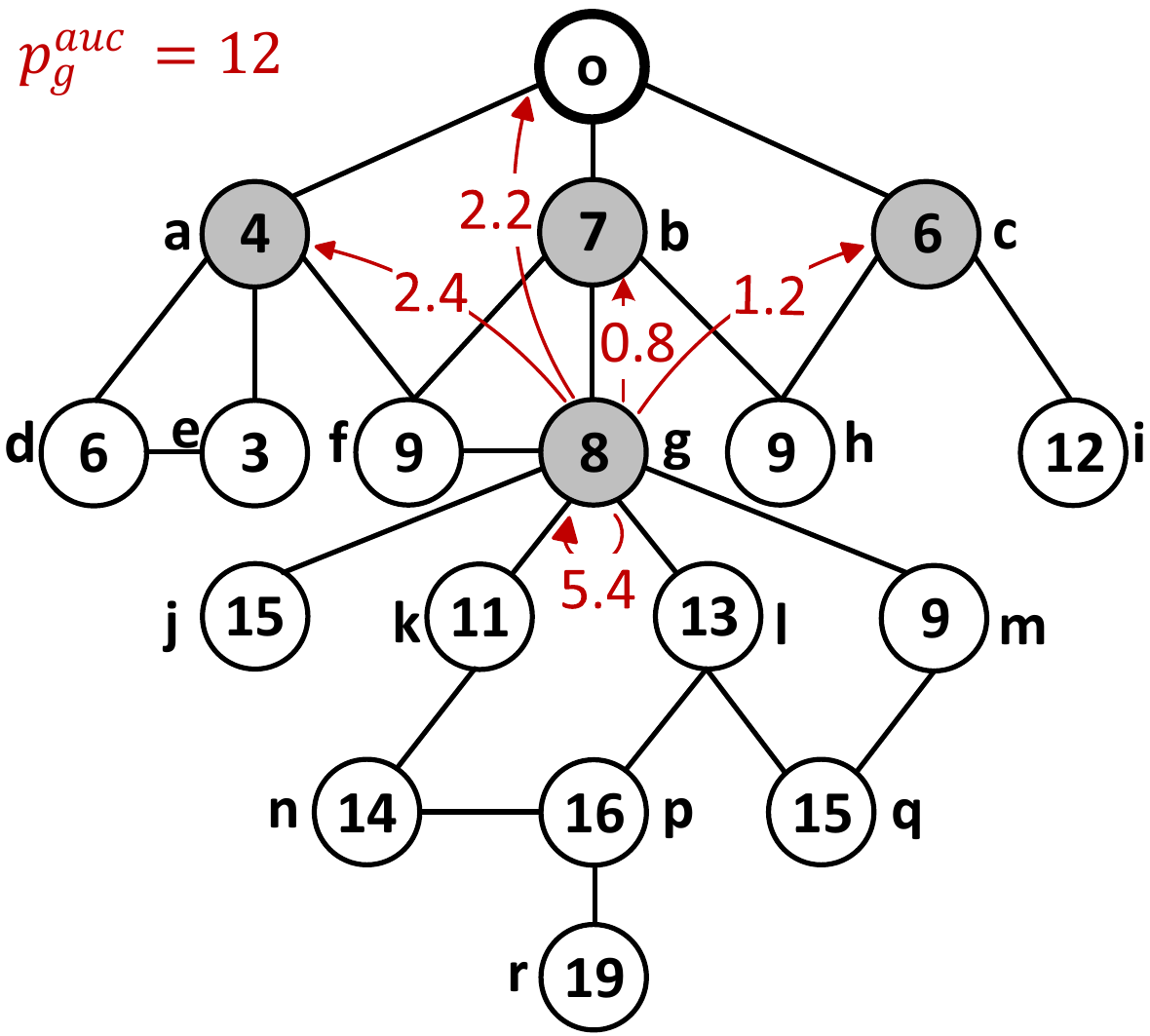}}%
	\subfigure[]{%
		\label{chart:NRM_7}%
		\includegraphics[width=0.5\linewidth]{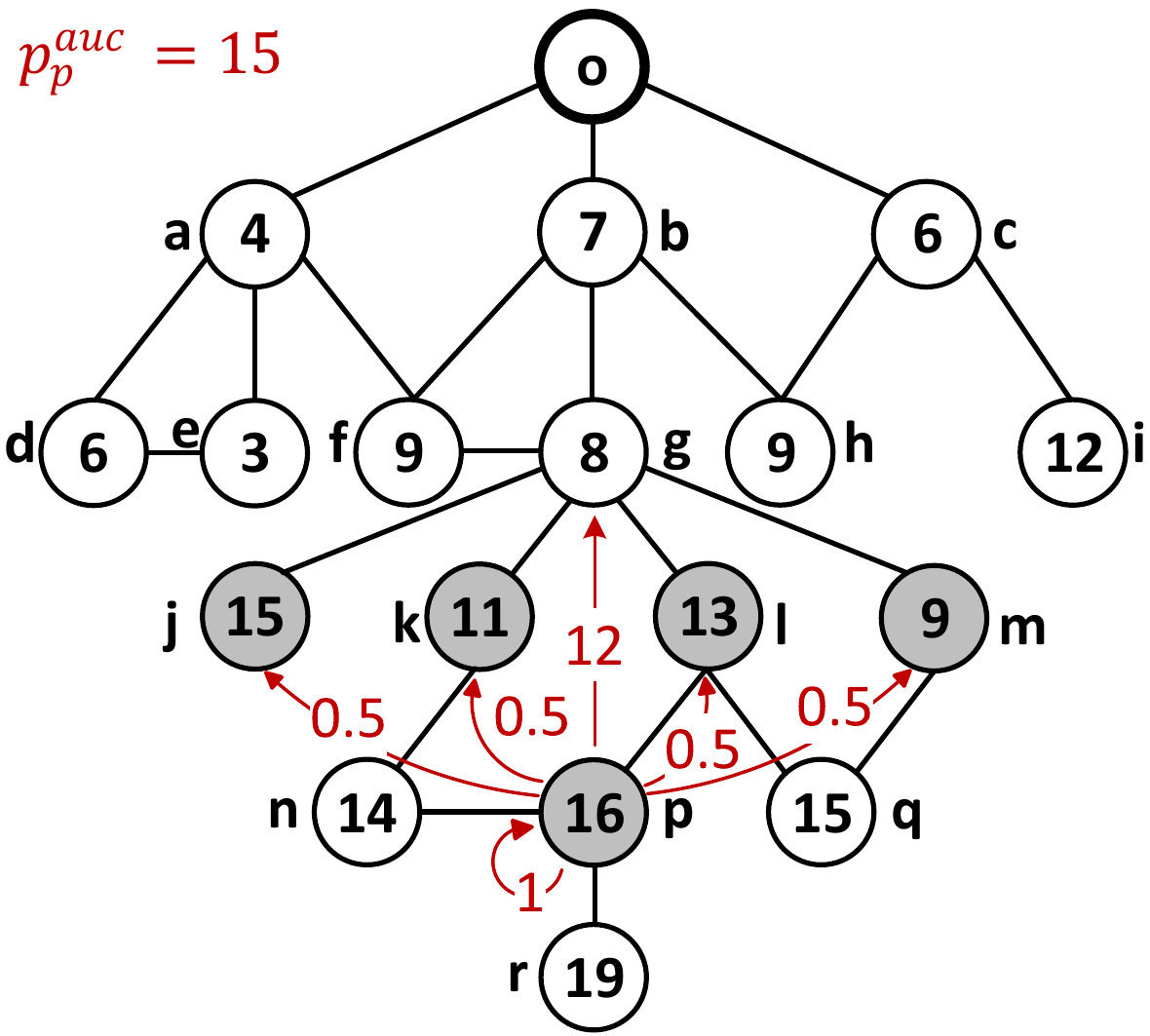}}
	\caption{(a) The ancestor sequence of agent $r$ and their siblings; (b)(c) A running process of NRM in graphs.}
	\label{NRM_process_2}
\end{figure}

Also an example is given to illustrate the mechanism. In Figure~\ref{chart:NRM_5}, the highest bidder is agent $r$. According to the definition, the green nodes are the ancestors of agent $r$ and purple nodes are their siblings, i.e., $A_r=\{g,p\}$, $B_g=\{a,b,c\}$ and $B_p=\{j,k,l,m\}$. In the running example in Figure~\ref{chart:NRM_6}~\ref{chart:NRM_7}, first for agent $g$, her required payment $p_g^{auc}=\hat{v}_i=12$ will all be used to redistribute among $a$, $b$, $c$ and $g$. The total number of agents in the subgraph of all $a$, $b$, $c$ and $g$ is $3+1+2+9=15$. If $a$ quits the mechanism, $d$ and $e$ will be not able to receive the information and $b$ will be the new ancestor with payment $12$. Thus we have $R_a=\frac{3*12}{15}=2.4$. If $b$ quits the mechanism, only herself will be out of the network and $a$ will be the new ancestor with payment $12$. Thus we have $R_b=\frac{1*12}{15}=0.8$. Similarly, we have $R_c=\frac{2*9}{15}=1.2$ and $R_g=\frac{9*9}{15}=5.4$. The surplus in this step is $12-2.4-0.8-1.2-5.4=2.2$, which will be given to the owner. In the same way, the required payment for agent $p$ is $p_p^{auc}=15$ and the remaining money after compensation is $p_p^{auc}-p_g^{auc}=3$. Then the money redistributed to each agent is $R_j=R_k=R_l=R_m=\frac{1*3}{6}=0.5$ and $R_p=\frac{2*3}{6}=1$. The surplus is $0$ and $p$ will keep the item since $\hat{v}_p=16\geq 15=p_p^{auc}$. Till now, the mechanism runs over. The winner is agent $p$, the social welfare is $\hat{v}_p=16$ and the total surplus is $2.2$. Compared to the classical Cavallo mechanism, agent $b$ is allocated the item with social welfare $\hat{v}_b=7$ and only three agents $a$, $b$ and $c$ have positive utilities while in NRM the social welfare is $16$ and $9$ agents have positive utilities. 

Since the network-based redistribution mechanism in graphs is a generalization of that in trees, we can easily obtain the following corollary.

\begin{corollary}
	The network-based redistribution mechanism in graphs runs no deficit and is IR, IC, ABB and at least as efficient as Cavallo mechanism among neighbours.
\end{corollary}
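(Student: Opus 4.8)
The plan is to prove the corollary by \emph{transfer}: the graph mechanism is literally Algorithm~\ref{NRM-T} with the tree notions of ancestor, subtree and sibling set replaced by their graph analogues (cut-point, subgraph $V_i$, and the child-neighbours of the previous ancestor). So I would re-run each of the five tree proofs and check that every structural fact they invoke survives the redefinition. The tree arguments rest on only a handful of facts: (i) the ancestors of the highest bidder $h$ form a sequence $A_h=(a_1,\dots,a_k)$ that is linearly ordered by depth, so the item still passes through a single chain; (ii) at each step a non-winning ancestor's payment $p_{a_{j-1}}^{auc}$ is exactly compensated by the next required payment $p_{a_j}^{auc}$; (iii) the redistributed amount $R_k=\frac{n_k}{n_X}S_{-k}$ is a monotone increasing function of the number of descendants in $k$'s subgraph; and (iv) within one redistribution step the fractions $n_k/n_X$ over $k\in X=B_{a_j}\cup\{a_j\}$ normalise to one and satisfy $\sum_{k\in X}R_k\le p_{a_j}^{auc}-p_{a_{j-1}}^{auc}$. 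Establishing (i)--(iv) in the graph setting is the entire content of the corollary.

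First I would dispatch (i): since every ancestor of $h$ is by definition a vertex lying on \emph{all} $o$--$h$ paths, any two of them are comparable (one must lie on every path to the other), so the cut-points are totally ordered by depth and $A_h$ is well-defined exactly as before. Facts (ii) and (iii) then follow essentially verbatim, because the payment rule $p_{a_j}^{auc}=\hat{v}_{V\setminus V_{a_j}}^{(1)}$ and the redistribution rule are unchanged in form and $S_{-k}$ depends only on valuations outside $V_k$: an agent who invites more neighbours enlarges only her own subgraph count $n_k$ (and hence $n_X$ by the same amount) while leaving $S_{-k}$ fixed, so $R_k$ stays monotone. Consequently the IR, IC and efficiency arguments (Theorem~\ref{thm:IR} and the subsequent truthfulness and Theorem~\ref{efficiency}) carry over word for word, since the four-category case analysis (winner, ancestors, siblings, others) and the chains of inequalities $u_w=v_w-p_w^{auc}+R_w\ge R_w\ge0$ and $\hat{v}_w\ge p_w^{auc}\ge\hat{v}_{r_o}^{(1)}$ depend only on (i)--(iii).

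The main obstacle is fact (iv), on which both non-deficit and ABB rest. In a tree the siblings' subtrees are disjoint, so $n_X=\sum_{k\in X}n_k$ counts each agent once and the ND/ABB computations are clean; in a graph I must re-justify the analogous combinatorics. The key lemma I would prove is that the subgraphs $\{V_k:k\in X\}$ of the child-neighbours of the common parent $a_{j-1}$ are pairwise disjoint: if $w\in V_x\cap V_y$ for distinct $x,y$, then every $o$--$w$ path passes through both $x$ and $y$, and by total ordering of $w$'s separating cut-points one of them, say $x$, would be an ancestor of the other, forcing $d_x<d_y$; but $x$ and $y$ are both child-neighbours of $a_{j-1}$ and hence share depth $d_{a_{j-1}}+1$, a contradiction. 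With disjointness in hand, $n_X$ counts distinct agents, the monotonicity used for IC is secured (new invitees enter only one subgraph), and the crucial counterfactual bound $S_{-k}\le p_{a_j}^{auc}-p_{a_{j-1}}^{auc}$ follows from $V_{a_j}\subseteq V_{a'_j}\cup V_k$ under the cut-point structure. The ND inequality $\sum_{k\in X}R_k\le p_{a_j}^{auc}-p_{a_{j-1}}^{auc}$ and the per-step ABB limit then reproduce the tree calculations unchanged; summing the non-negative per-step surpluses gives overall non-deficit, and taking $n\to\infty$ step by step gives ABB, completing the corollary.
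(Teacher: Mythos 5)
Your proposal is correct and follows the same route as the paper: the paper offers no proof of this corollary at all, stating only that the graph mechanism ``is a generalization of that in trees'' and that the result therefore follows. What you add --- the total ordering of the cut-points separating $o$ from $h$ (so that $A_h$ is a well-defined chain), the pairwise disjointness of the subgraphs $V_k$ for $k\in B_{a_j}\cup\{a_j\}$ via the depth contradiction, and the observation that these are exactly the structural facts (i)--(iv) on which the five tree proofs rest --- is precisely the verification the paper leaves implicit, so your write-up is a strictly more complete version of the paper's argument rather than a different one.
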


\section{Conclusion}
\label{section:conclusion}
In this paper, we considered the redistribution mechanism design problem in social networks, where the owner wants to allocate one item and hopes the wealth maintained among the agents as much as possible. The objective is to incentivize agents participated to invite all their neighbours to the mechanism so that the owner can make the allocation more efficient. The classical Cavallo mechanism performs well in the traditional static setting; however, it may lead to a deficit and disincentivize the agents to diffuse the information if it is directly extended in social networks. To overcome the challenge, we propose a novel network-based redistribution mechanism which incentivizes agents to invite all their neighbours. The key of our mechanism is that the reward redistributed to each agent on the network is carefully designed by a monotone increasing function to the number of participants she invites. The mechanism works not only for the tree structures but also for the common graphs. Moreover, the mechanism satisfies all the desirable properties of individual rationality, incentive compatibility, asymptotically budget-balance and non-deficit. More importantly, the allocation is also more efficient. 

Our work has many interesting aspects for further investigation. Although we have shown the efficiency impossibility result in the paper, it is still an interesting future work to study the efficiency approximation for some certain network structures like trees given some prior knowledge about agents' valuations. We only consider the single-item situation in this paper, so it may be a challenge to extend the mechanism for multiple items, which cannot be achieved by simply running the mechanism for several times. We assume that there is no cost for an agent to spread the information to their neighbours in this paper. Another interesting future work may be designing a mechanism for the setting with cost. 

%
%


\bibliographystyle{ACM-Reference-Format}  
\bibliography{sample-bibliography}  

\end{document}